\theoremstyle{plain}
\newtheorem{theorem}{Theorem}[section]
\newtheorem{proposition}[theorem]{Proposition}
\theoremstyle{definition}
\theoremstyle{remark}
\newcommand{\sigmoid}{\operatorname{sigmoid}}
\definecolor{lightgray}{gray}{0.9}
\DeclareRobustCommand\onedot{\futurelet\@let@token\@onedot}
\def\@onedot{\ifx\@let@token.\else.\null\fi\xspace}
\title{Enhancing Visual Prompting through Expanded Transformation Space and Overfitting Mitigation}
\author{%
  Shohei Enomoto \\
  NTT\\
  Tokyo, Japan\\
  \texttt{shohei.enomoto@ntt.com} \\
}
\begin{document}

\maketitle

\begin{abstract}
Visual prompting (VP) has emerged as a promising parameter-efficient fine-tuning approach for adapting pre-trained vision models to downstream tasks without modifying model parameters. Despite offering advantages like negligible computational overhead and compatibility with black-box models, conventional VP methods typically achieve lower accuracy than other adaptation approaches. Our analysis reveals two critical limitations: the restricted expressivity of simple additive transformation and a tendency toward overfitting when the parameter count increases. To address these challenges, we propose ACAVP (Affine, Color, and Additive Visual Prompting), which enhances VP's expressive power by introducing complementary transformation operations: affine transformation for creating task-specific prompt regions while preserving original image information, and color transformation for emphasizing task-relevant visual features. Additionally, we identify that overfitting is a critical issue in VP training and introduce TrivialAugment as an effective data augmentation, which not only benefits our approach but also significantly improves existing VP methods, with performance gains of up to 12 percentage points on certain datasets. This demonstrates that appropriate data augmentation is universally beneficial for VP training. Extensive experiments across twelve diverse image classification datasets with two different model architectures demonstrate that ACAVP achieves state-of-the-art accuracy among VP methods, surpasses linear probing in average accuracy, and exhibits superior robustness to distribution shifts, all while maintaining minimal computational overhead during inference.
Our code is available at \url{https://github.com/s-enmt/ACAVP}.
\end{abstract}

\section{Introduction}
\label{sec:intro}

Visual prompting (VP)~\citep{vp,ar} has emerged as a promising parameter-efficient fine-tuning technique for adapting pre-trained vision models to downstream tasks.
By learning task-specific noise applied to input images while keeping the model parameters frozen, VP offers significant advantages: it introduces negligible computational overhead and can be applied to black-box models where internal parameters are inaccessible~\citet{bar,blackvip}.

Despite these advantages, VP methods typically achieve lower accuracy than full fine-tuning~\citep{vp}.
A straightforward approach to improving VP accuracy is to increase the size of the prompt region, thereby enhancing the expressive power through increased parameterization.
Theoretically, since the transformation space with increased parameters becomes a superset of the original transformation space, the approximation error should decrease with more parameters~\citep{cai2024sample}.
However, as our preliminary experiments show, this approach is ineffective.
As illustrated in \Cref{tab:motivation}, increasing the number of VP parameters does not improve training accuracy, and more importantly, leads to lower test accuracy, indicating overfitting issues rather than enhanced expressive power.

\begin{wraptable}{l}{0.48\linewidth}
\centering
\small
\vspace{-0.2cm}
\caption{
Results of preliminary experiments comparing performance between padding-type noise addition and image-sized noise addition.
}
\vspace{-0.2cm}
\resizebox{0.46\textwidth}{!}{
\begin{tabular}{l|cc}
\toprule
Image & \includegraphics[width=0.2\linewidth]{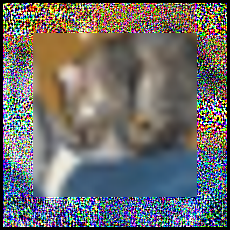} &
 \includegraphics[width=0.2\linewidth]{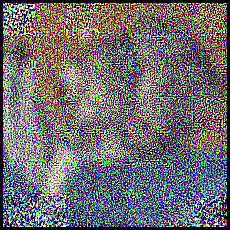}  \\
\midrule
Params & 23,280 & 50,176 \\ \midrule
Train Acc & 97.53 \fontsize{5pt}{5pt}\selectfont{$\pm$ 0.14} & 97.35 \fontsize{5pt}{5pt}\selectfont{$\pm$ 0.18} \\ \midrule
Test Acc & 93.65 \fontsize{5pt}{5pt}\selectfont{$\pm$ 0.04} & 89.52 \fontsize{5pt}{5pt}\selectfont{$\pm$ 0.12} \\ 
\bottomrule
\end{tabular}
}
\label{tab:motivation}
\end{wraptable}

Our analysis reveals two critical limitations in VP: (1) simply adding more parameters with the traditional additive transformation has inherent limitations in its expressive power, and (2) increasing parameter count leads to overfitting issues.
This leads us to a key question.
How can we enhance the expressive power of VP while effectively preventing overfitting?

To address this challenge, we propose ACAVP (Affine, Color, and Additive Visual Prompting), a novel VP transformation design that enhances accuracy while maintaining minimal computational overhead.
ACAVP incorporates two parameterized image transformation techniques into existing VP methods: affine and color transformations.
The affine transformation component generates VP regions through operations such as resizing and rotation while preserving the original image information. 
In contrast, the color transformation component enhances the original image information through adjustments to brightness and contrast.
Importantly, both components retain the advantages of VP technology because of their negligible computational cost compared to model inference.
Furthermore, to address the overfitting problem, we comprehensively evaluated various regularization techniques for VP training.
After experimenting with several strategies, including dropout, MSE loss regularization, and weight decay, we found that data augmentation methods, particularly TrivialAugment~\citep{trivialaugment}, are most effective for mitigating overfitting in VP training.
TrivialAugment provides a low-cost yet powerful approach to increase input diversity during training, significantly stabilizing the learning process and improving generalization performance.

We evaluated ACAVP's effectiveness across twelve image classification datasets using two different architectures.
Our method outperforms existing VP approaches and achieves higher average accuracy than linear probing.
Additionally, ACAVP demonstrates superior out-of-distribution (OOD) performance due to its improved generalization capabilities.
The computational overhead of ACAVP is negligible compared to the inference time of the recognition model.

Our main contributions are as follows:
\begin{itemize}[leftmargin=*]
\setlength{\itemsep}{2pt}
\item We propose ACAVP, a novel VP method that enhances the expressive power of visual prompting by introducing complementary transformation operations: affine transformation for creating task-specific prompt regions while preserving original image information, and color transformation for emphasizing task-relevant visual features. 
This approach achieves state-of-the-art accuracy among VP methods while maintaining minimal computational overhead.
\item We identify and address the critical overfitting problem in VP by empirically evaluating regularization techniques. 
We demonstrate that TrivialAugment effectively mitigates overfitting in ACAVP and existing VP methods, providing a general technique for improving VP performance.
\item We evaluated our approach through extensive experiments across twelve diverse image classification datasets and two different model architectures. 
Our experiments demonstrate that ACAVP achieves the highest average accuracy among VP methods, surpasses linear probing in average accuracy, and exhibits superior robustness to distribution shifts in out-of-distribution scenarios.
\end{itemize}

\section{Related Work}
\label{sec:rw}

Visual prompting (VP)~\citep{vp} has emerged as a promising parameter-efficient fine-tuning paradigm. 
First introduced as adversarial reprogramming (AR) by \citet{ar}, VP adapts pre-trained models to downstream tasks by applying learnable prompts to input images while keeping the model parameters frozen. 
This approach enables efficient adaptation without modifying the underlying model architecture.

\paragraph{Transformation design for VP.}
Although the original VP adds padding-type noise to the image, several studies have focused on improving its transformation design.
\citet{evp} introduced enhanced visual prompting (EVP), which preserves original image information by resizing the image and adding prompts around it. 
Building on this concept, \citet{autovp} proposed AutoVP, an end-to-end framework that automates VP design choices and introduces learnable input scaling parameters to optimize the resize intensity dynamically. 
In a different direction, \citet{lorvp} developed Low-Rank matrix multiplication for visual prompting (LoR-VP), which uses low-rank matrix decomposition to ensure that visual prompts influence all image patches while considering induced biases between patches.
Rather than using a single prompt for an entire task, several approaches generate customized prompts for individual samples. 
\citet{blackvip} proposed Coordinator, which uses an encoder-decoder network to generate sample-specific prompts.
Similarly, \citet{cai2024sample} proposed sample-specific multi-channel masks (SMM) that generate appropriate prompt regions for each individual sample. 
\citet{cho2024training} expanded this concept by generating prompts in both spatial and frequency domains using a network for sample-specific adaptation.
While sample-specific VP approaches achieve high expressive power, they significantly increase inference-time computational overhead, undermining one of VP's primary advantages. 
In contrast, our work focuses on task-level visual prompting while addressing its limitations.

\paragraph{Learning strategies for VP.}
Researchers have also explored improved learning strategies for VP. 
\citet{evp} incorporated techniques from adversarial attacks, specifically input diversity~\citep{input_div} and gradient normalization~\citep{grad_norm}, to improve training stability. 
\citet{attrvr} proposed attribute-based visual reprogramming (AttrVR), leveraging descriptive and distinctive attributes to mitigate ambiguities caused by fixed template prompts and promote more context-aware alignment between images and attributes. 
\citet{tvp} introduced transferable visual prompting (TVP), which learns a single VP to improve performance across diverse multimodal large language models.
In the non-CLIP model, a random class mapping is used to align the pre-trained model classes with the downstream tasks' classes.
\citet{evp} proposed a method that first infers the pre-trained model on a downstream dataset and then performs label mapping according to its output distribution.
\citet{ilm} introduced iterative label mapping (ILM), which automatically remaps source labels to target labels, progressively improving VP performance on target tasks. 
Taking a probabilistic approach, \citet{blm} proposed bayesian-guided label mapping (BLM), which constructs a sequentially updated stochastic label mapping matrix quantifying pairwise relationships between pre-trained labels and downstream labels.
In this study, we show that VP training is less accurate due to overfitting, and that countermeasures, especially data augmentation, are the most effective learning strategy.

We propose a novel VP design that enhances expressive power while minimizing the computational cost increase in inference time.
Additionally, we introduce data augmentation techniques to mitigate overfitting, a common issue in VPs. 
Our approach balances expressive power and computational efficiency, maintaining VP's key advantages while improving performance.

\section{Proposed Method: ACAVP}
\label{sec:method}

We propose a novel VP approach, ACAVP (Affine, Color, and Additive Visual Prompting), to enhance VP by introducing additional transformation operations that address its limitations.
Our investigation begins with identifying the primary constraints of existing VP techniques and then proposing solutions that maintain the low computational overhead characteristic of VP methods.
\Cref{fig:overview} shows an overview of ACAVP.
The theoretical justification for our approach is provided in Appendix~\ref{sec:theory}, where we formally analyze the hypothesis spaces of existing VP methods and ACAVP, demonstrating that ACAVP has a smaller approximation error, which explains its superior performance.
The implementation details are provided in \Cref{sec:impl_detail}.

\begin{figure}[tb!]
\centering
\includegraphics[width=1.0\linewidth]{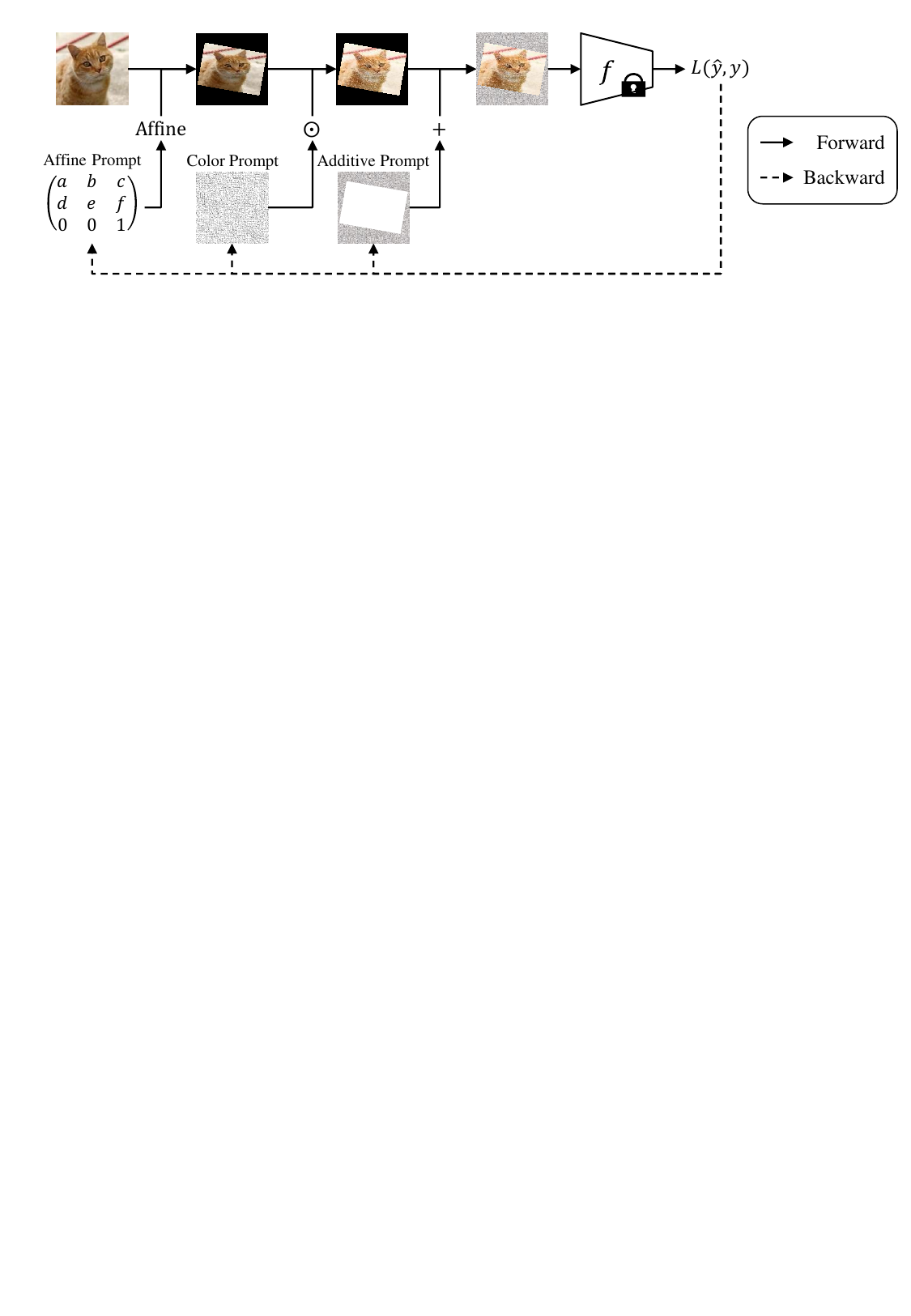}
\caption{
Overview of ACAVP.
The input image undergoes three transformations: first, an affine transformation using the affine prompt, followed by a color transformation using the color prompt, and finally an additive transformation using the additive prompt.
The resulting VP image is fed into a frozen recognition model, and the loss is backpropagated to update all three prompt parameters.
}
\label{fig:overview}
\end{figure}

\subsection{Preliminaries}
VP is a parameter-efficient fine-tuning technique that adapts pre-trained recognition models to downstream tasks by applying learnable prompts to input images while keeping the model parameters frozen \citep{vp}.
During training, VP learns a prompt that minimizes the task-specific loss of the recognition model. 
At inference time, the learned prompt is applied to input images before model prediction.
Formally, given an input image $x \in \mathbb{R}^{3 \times H \times W}$, traditional VP transforms it into a prompted image $\tilde{x} \in \mathbb{R}^{3 \times H \times W}$ using the following equation: $\tilde{x} = x + M \odot \delta$,
where $M \in \{0,1\}^{H \times W}$ is a binary mask indicating the prompt region, $\delta \in \mathbb{R}^{3 \times H \times W}$ is the learnable prompt parameter, and $\odot$ denotes element-wise multiplication.

VP has advantages such as almost zero computational overhead during inference and applicability to black-box recognition models (including cloud APIs)~\citep{blackvip,bar} in order to maintain the original model architecture and parameters.
However, as our preliminary experiments show, VP suffers from two significant limitations: limited expressive power due to its reliance on simple additive transformations and a tendency to overfit when the parameter count increases.

\subsection{Problem Analysis}
Our preliminary experiments revealed two significant limitations of conventional VP approaches.
First, increasing the number of VP parameters by expanding the prompt size theoretically reduces the approximation error, but does not improve the test or training accuracy.
This suggests that the additive transformation alone has inherent limitations in its expressive power.    
Second, as the number of parameters increases, the VP model tends to overfit, reducing test accuracy.
Based on these observations, we hypothesized that: (1) introducing different types of transformations beyond addition could enhance the expressive power of VP, and (2) introducing effective regularization strategies could mitigate overfitting issues.

\subsection{ACAVP Design}
To address these limitations, we propose ACAVP (Affine, Color, and Additive Visual Prompting), an enhanced VP method incorporating affine transformation and color transformation into the VP design, accompanied by effective regularization through data augmentation techniques.
Our proposed method transforms the input image according to:

\begin{equation}
    \tilde{x} = \text{Affine}(x, A) \odot \hat{\sigma} + M \odot \delta
\end{equation}

where $\text{Affine}(\cdot, A)$ represents the affine transformation with learnable affine prompt $A \in \mathbb{R}^{3 \times 3}$, $\hat{\sigma} \in \mathbb{R}^{3 \times H \times W}$ is the learnable color prompt, and $\delta \in \mathbb{R}^{3 \times H \times W}$ is the learnable additive prompt. 
Unlike conventional VP methods, $M \in \{0, 1\}^{H \times W}$ is dynamically generated after the affine transformation is applied, where $M$ has value 1 for pixels where all three channels are zero, and 0 elsewhere. 
This ensures the additive prompt is only applied to the empty regions that the affine transformation creates.
This formulation extends the transformation space beyond simple addition, enabling more expressive prompts while maintaining the computational efficiency of VP.

\textbf{The affine transformation prompt} generates task-specific feature regions while preserving the original image information.
The affine transformation enables geometric modifications, including rotation, translation, shearing, and scaling, providing a richer set of transformations than simple addition.
For an input point $(u, v)$, the affine transformation with transformation matrix $A$ computes its new coordinates $(\hat{u}, \hat{v})$ as follows:

\begin{equation}
\begin{bmatrix} 
\hat{u} \\ 
\hat{v} \\
1
\end{bmatrix} = 
\begin{bmatrix} 
\hat{s_x}\cos\hat{\theta} + \hat{sh_x}\sin\hat{\theta} & -\hat{s_x}\sin\hat{\theta} + \hat{sh_x}\cos\hat{\theta} & \hat{t_x} \\
\hat{s_y}\sin\hat{\theta} + \hat{sh_y}\cos\hat{\theta} & \hat{s_y}\cos\hat{\theta} + \hat{sh_y}\sin\hat{\theta} & \hat{t_y} \\
0 & 0 & 1
\end{bmatrix}
\begin{bmatrix} 
u \\ 
v \\
1
\end{bmatrix}
\end{equation}

To ensure stable training and prevent extreme transformations that might destroy the original image information, we constrain the transformation parameters using activation functions:
$\hat{t} = \tanh(t) \cdot R_t$, $\hat{\theta} = \tanh(\theta) \cdot R_{\theta}$, $\hat{s} = \sigmoid(s)$, and $\hat{sh} = \tanh(sh) \cdot R_{sh}$.
Here, $t$ represents learnable translation parameters, $\theta$ is a learnable rotation parameter, $s$ refers to learnable scaling parameters, and $sh$ denotes learnable shearing parameters.
The hyperparameters $R_t$, $R_{\theta}$, and $R_{sh}$ control the maximum range of each transformation.
The default values for these hyperparameters are set to $R_t = 0.05$, $R_{\theta} = R_{sh} = 0.1$.

\textbf{The color transformation prompt} further enhances the expressive power of our VP by adjusting image properties such as brightness and contrast.
This is achieved through a learnable parameter $\sigma \in \mathbb{R}^{3 \times H \times W}$ applied as a multiplicative factor to the transformed image. 
The color transformation allows the prompt to emphasize or attenuate specific features in the original image, potentially highlighting task-relevant information.

To prevent excessive color transformation that might corrupt the original image information, we constrain the values using:
$\hat{\sigma} = \sigmoid(\sigma) \cdot R_{\sigma},$
where $R_{\sigma}$ is a hyperparameter controlling the maximum magnitude of color transformation, with a default value of $R_{\sigma} = 6.0$. 
This ensures that the color transformations remain within a reasonable range while still providing sufficient flexibility to enhance task-relevant features.

\subsection{Overfitting Mitigation}
As shown in our preliminary experiments, increasing the number of VP parameters theoretically reduces the approximation error, but causes overfitting.
This observation led us to hypothesize that effective regularization strategies would be crucial for our enhanced ACAVP approach with its additional transformation parameters.
To identify the most effective overfitting mitigation technique, we comprehensively evaluated various regularization approaches applied to VP training.
Specifically, we investigated four categories of techniques:
data augmentation, dropout applied after feature extraction layer, MSE loss between original and prompted images as an additional regularization loss term, and weight decay.
Our experimental results, detailed in \Cref{sec:exp_overfitting}, revealed that data augmentation methods consistently provided the most substantial improvements in generalization performance.
Among the various augmentation techniques evaluated, TrivialAugment emerged as particularly effective for VP training.
TrivialAugment applies a randomly selected augmentation operation with random magnitude to each input image, providing sufficient diversity to mitigate overfitting while maintaining implementation simplicity.
While previous VP approaches have employed basic augmentations like random crop and horizontal flip, we found that more sophisticated augmentation strategies could significantly improve the performance of VP.

\section{Experiments}
\label{sec:exp}

To evaluate the effectiveness of ACAVP, we conducted extensive experiments across 12 diverse image classification datasets using two different model architectures. 
We compared ACAVP against zero-shot inference (ZS) with CLIP~\citep{clip} and state-of-the-art VP methods (VP~\citep{vp}, EVP~\citep{evp}, and AutoVP~\citep{autovp}). 
All experiments were repeated three times, and we report the average accuracy and standard error. 
Each dataset was split into training, validation, and test sets, and we used the checkpoint with the highest accuracy on the validation set for final testing.
Detailed experimental settings are provided in \Cref{sec:exp_setup}.

\subsection{Main Results}

We evaluated ACAVP using a ViT-B32-based CLIP model~\citep{clip,vit} across 12 diverse image classification datasets.
\Cref{tab:results_clip_vit_b32} shows the comparative performance results.
ACAVP demonstrates superior accuracy on most datasets compared to existing VP methods, achieving an average accuracy of 83.18$\%$, which significantly outperforms VP, EVP, and AutoVP.
The consistent improvements across diverse datasets highlight the enhanced generalization capability of ACAVP.
Notably, ACAVP shows remarkable improvement on the Flowers dataset, with absolute gains of 6.67 percentage points over AutoVP and 21.53 percentage points over the VP.
This substantial improvement suggests that our affine transformation and color transformation components are particularly effective for datasets with fine-grained visual features, such as the intricate textures and shapes found in flower images.
The ability to capture such nuanced visual characteristics demonstrates the enhanced expressive power of ACAVP.
Interestingly, ACAVP surpasses the average accuracy of Linear Probing (LP), which modifies the model parameters.
VP methods, including ours, demonstrate particularly strong performance on datasets that differ significantly from the model's pretraining distribution.
For instance, on specialized datasets such as GTSRB (traffic signs) and SVHN (street view house numbers), VP approaches substantially outperform LP.
This pattern aligns with previous findings that VP is especially effective for adapting to domains that deviate from the source distribution used during pretraining~\citep{vp}.
In summary, our experimental results demonstrate that incorporating affine transformation and color transformation alongside overfitting mitigation techniques effectively addresses the limitations of conventional VP approaches.
These results validate our hypothesis that diversifying the transformation space and mitigating overfitting are crucial factors for enhancing the effectiveness of VP techniques.

\begin{table}[tb!]
\centering
\caption{
Classification accuracy comparison on 12 image classification datasets using CLIP ViT-B/32.
\textbf{Bold} and \underline{underlined} values denote the first and second highest accuracy, respectively. 
The LP and FT results refer to \citet{vp,autovp,tangFusionBenchComprehensiveBenchmark2024}.
}
\label{tab:results_clip_vit_b32}
\small
\begin{tabular}{l|cccc
>{\columncolor{lightgray}}c|cc}
\toprule
Method   & ZS    & VP               & EVP              & AutoVP           & ACAVP & LP    & FT    \\ \midrule
CIFAR10  & 88.93 & 93.65 \fontsize{5pt}{5pt}\selectfont{$\pm$ 0.04} & \underline{95.87} \fontsize{5pt}{5pt}\selectfont{$\pm$ 0.06}  & 95.39 \fontsize{5pt}{5pt}\selectfont{$\pm$ 0.06}  & \textbf{96.27} \fontsize{5pt}{5pt}\selectfont{$\pm$ 0.07}  & 95.00    & 95.80  \\
CIFAR100 & 63.09 & 74.71 \fontsize{5pt}{5pt}\selectfont{$\pm$ 0.09}  & 78.33 \fontsize{5pt}{5pt}\selectfont{$\pm$ 0.09}  & \underline{79.89} \fontsize{5pt}{5pt}\selectfont{$\pm$ 0.04}  & \textbf{80.06} \fontsize{5pt}{5pt}\selectfont{$\pm$ 0.25}  & 80.00    & 82.10  \\
CLEVR    & 20.70 & 76.40 \fontsize{5pt}{5pt}\selectfont{$\pm$ 0.56}  & 76.82 \fontsize{5pt}{5pt}\selectfont{$\pm$ 0.27}  & \textbf{79.19} \fontsize{5pt}{5pt}\selectfont{$\pm$ 0.29}  & \underline{77.19} \fontsize{5pt}{5pt}\selectfont{$\pm$ 0.17}  & 66.00    & 94.40  \\
DTD      & 42.91 & 57.19 \fontsize{5pt}{5pt}\selectfont{$\pm$ 0.14}  & 60.89 \fontsize{5pt}{5pt}\selectfont{$\pm$ 0.57}  & \underline{62.81} \fontsize{5pt}{5pt}\selectfont{$\pm$ 0.12}  & \textbf{67.43} \fontsize{5pt}{5pt}\selectfont{$\pm$ 0.83}  & 74.60  & 72.30  \\
EuroSAT  & 39.02 & 96.36 \fontsize{5pt}{5pt}\selectfont{$\pm$ 0.07}  & 97.27 \fontsize{5pt}{5pt}\selectfont{$\pm$ 0.08}  & \underline{97.32} \fontsize{5pt}{5pt}\selectfont{$\pm$ 0.11}  & \textbf{97.51} \fontsize{5pt}{5pt}\selectfont{$\pm$ 0.09}  & 95.30  & 97.90  \\
Flowers  & 61.63 & 67.83 \fontsize{5pt}{5pt}\selectfont{$\pm$ 0.47}  & 77.76 \fontsize{5pt}{5pt}\selectfont{$\pm$ 0.59}  & \underline{82.69} \fontsize{5pt}{5pt}\selectfont{$\pm$ 0.80}  & \textbf{89.36} \fontsize{5pt}{5pt}\selectfont{$\pm$ 0.19}  & 96.90  & 97.40  \\
Food     & \textbf{79.67} & \underline{78.92} \fontsize{5pt}{5pt}\selectfont{$\pm$ 0.03}  & 77.48 \fontsize{5pt}{5pt}\selectfont{$\pm$ 0.00}  & 76.53 \fontsize{5pt}{5pt}\selectfont{$\pm$ 0.03}  & 77.86 \fontsize{5pt}{5pt}\selectfont{$\pm$ 0.09}  & 84.60  & 80.50  \\
GTSRB    & 18.18 & 91.38 \fontsize{5pt}{5pt}\selectfont{$\pm$ 0.28}  & 76.17 \fontsize{5pt}{5pt}\selectfont{$\pm$ 2.40}  & \textbf{92.50} \fontsize{5pt}{5pt}\selectfont{$\pm$ 0.14}  & \underline{92.04} \fontsize{5pt}{5pt}\selectfont{$\pm$ 0.38}  & 85.80  & 98.90  \\
Pets     & 85.96 & 86.51 \fontsize{5pt}{5pt}\selectfont{$\pm$ 0.30}  & \underline{87.85} \fontsize{5pt}{5pt}\selectfont{$\pm$ 0.06}  & 86.71 \fontsize{5pt}{5pt}\selectfont{$\pm$ 0.26}  & \textbf{87.55} \fontsize{5pt}{5pt}\selectfont{$\pm$ 0.02}  & 89.20  & 88.50  \\
SUN      & 60.55 & 60.39 \fontsize{5pt}{5pt}\selectfont{$\pm$ 0.12}  & \underline{63.51} \fontsize{5pt}{5pt}\selectfont{$\pm$ 0.03}  & 61.66 \fontsize{5pt}{5pt}\selectfont{$\pm$ 0.17}  & \textbf{64.95} \fontsize{5pt}{5pt}\selectfont{$\pm$ 0.10}  & 75.00    & 64.00    \\
SVHN     & 15.30 & 91.61 \fontsize{5pt}{5pt}\selectfont{$\pm$ 0.12}  & 86.69 \fontsize{5pt}{5pt}\selectfont{$\pm$ 0.09}  & \textbf{92.94} \fontsize{5pt}{5pt}\selectfont{$\pm$ 0.18}  & \underline{91.81} \fontsize{5pt}{5pt}\selectfont{$\pm$ 0.18}  & 65.40  & 95.70  \\
UCF      & 61.46 & 65.68 \fontsize{5pt}{5pt}\selectfont{$\pm$ 0.28}  & 67.69 \fontsize{5pt}{5pt}\selectfont{$\pm$ 0.52}  & \underline{70.88} \fontsize{5pt}{5pt}\selectfont{$\pm$ 0.11}  & \textbf{76.11} \fontsize{5pt}{5pt}\selectfont{$\pm$ 0.38}  & 83.30  & 80.90  \\
\midrule
Average  & 53.12 & 78.39            & 78.86            & \underline{81.54}            & \textbf{83.18}                                     & 82.59 & 87.37 \\ \bottomrule
\end{tabular}
\end{table}

We further evaluated ACAVP using ResNet50~\citep{resnet} pre-trained on the ImageNet-1k dataset~\citep{imagenet} across the same 12 image classification datasets.
This evaluation aims to demonstrate the generalizability of our approach across different model architectures and pretraining strategies.
\Cref{tab:results_rn50} shows the results.
Similar to our findings with the CLIP model, ACAVP outperforms baseline approaches on many datasets and achieves the highest average accuracy among all VP methods.
Specifically, without label mapping optimization (left columns), ACAVP achieves the highest average accuracy, significantly outperforming the second-best method by 15.01 percentage points and 21.50 percentage points on the Flowers and GTSRB datasets, respectively.
Similarly, with label mapping (denoted by $^\dagger$ in the right columns), ACAVP achieves the highest average accuracy.
While all methods benefit from label mapping, our approach maintains its superior performance, suggesting that the benefits of our enhanced transformation space are complementary to those provided by label mapping.
The consistent relative performance across both settings indicates that ACAVP's improvements stem from fundamental enhancements to the VP framework rather than interactions with specific implementation details.
These results demonstrate that ACAVP generalizes effectively across different model architectures and pretraining strategies.
The consistent performance improvements observed with both CLIP and ResNet50 backbones suggest that the limitations of conventional VP approaches are fundamental rather than model-specific and that our enhancements to the transformation space and learning methodology principledly address these limitations.

\begin{table}[tb!]
\centering
\caption{
Classification accuracy comparison on 12 image classification datasets using ResNet50 pre-trained on ImageNet-1k. 
$^\dagger$ indicates results with label mapping proposed by \citet{evp}. 
}
\label{tab:results_rn50}
\small
\resizebox{\linewidth}{!}{
\begin{tabular}{l|ccc
>{\columncolor{lightgray}}c| ccc
>{\columncolor{lightgray}}c}
\toprule
Method   & VP               & EVP              & AutoVP           & ACAVP             & VP$^\dagger$     & EVP$^\dagger$    & AutoVP$^\dagger$ & ACAVP$^\dagger$   \\ \midrule
CIFAR10  & 55.09 \fontsize{5pt}{5pt}\selectfont{$\pm$ 0.25} & 54.28 \fontsize{5pt}{5pt}\selectfont{$\pm$ 0.20} & 45.52 \fontsize{5pt}{5pt}\selectfont{$\pm$ 0.34} & 53.48 \fontsize{5pt}{5pt}\selectfont{$\pm$ 0.61} & 64.09 \fontsize{5pt}{5pt}\selectfont{$\pm$ 0.08} & 67.93 \fontsize{5pt}{5pt}\selectfont{$\pm$ 0.26} & 62.77 \fontsize{5pt}{5pt}\selectfont{$\pm$ 0.35} & 72.92 \fontsize{5pt}{5pt}\selectfont{$\pm$ 0.91} \\
CIFAR100 & 10.95 \fontsize{5pt}{5pt}\selectfont{$\pm$ 0.15} & 7.40 \fontsize{5pt}{5pt}\selectfont{$\pm$ 0.26 } & 6.80 \fontsize{5pt}{5pt}\selectfont{$\pm$ 0.36 } & 14.84 \fontsize{5pt}{5pt}\selectfont{$\pm$ 0.08} & 27.69 \fontsize{5pt}{5pt}\selectfont{$\pm$ 0.11} & 32.30 \fontsize{5pt}{5pt}\selectfont{$\pm$ 0.29} & 29.44 \fontsize{5pt}{5pt}\selectfont{$\pm$ 0.39} & 33.71 \fontsize{5pt}{5pt}\selectfont{$\pm$ 0.56} \\
CLEVR    & 42.19 \fontsize{5pt}{5pt}\selectfont{$\pm$ 0.19} & 40.62 \fontsize{5pt}{5pt}\selectfont{$\pm$ 0.47} & 28.90 \fontsize{5pt}{5pt}\selectfont{$\pm$ 0.49} & 32.23 \fontsize{5pt}{5pt}\selectfont{$\pm$ 0.06} & 40.23 \fontsize{5pt}{5pt}\selectfont{$\pm$ 0.34} & 37.88 \fontsize{5pt}{5pt}\selectfont{$\pm$ 0.24} & 29.16 \fontsize{5pt}{5pt}\selectfont{$\pm$ 0.41} & 38.95 \fontsize{5pt}{5pt}\selectfont{$\pm$ 0.49} \\
DTD      & 13.46 \fontsize{5pt}{5pt}\selectfont{$\pm$ 0.36} & 15.98 \fontsize{5pt}{5pt}\selectfont{$\pm$ 0.61} & 12.59 \fontsize{5pt}{5pt}\selectfont{$\pm$ 0.24} & 9.34 \fontsize{5pt}{5pt}\selectfont{$\pm$ 2.85 } & 30.65 \fontsize{5pt}{5pt}\selectfont{$\pm$ 0.15} & 35.62 \fontsize{5pt}{5pt}\selectfont{$\pm$ 0.52} & 30.04 \fontsize{5pt}{5pt}\selectfont{$\pm$ 0.26} & 34.16 \fontsize{5pt}{5pt}\selectfont{$\pm$ 0.77} \\
EuroSAT  & 79.06 \fontsize{5pt}{5pt}\selectfont{$\pm$ 0.08} & 78.44 \fontsize{5pt}{5pt}\selectfont{$\pm$ 0.11} & 68.79 \fontsize{5pt}{5pt}\selectfont{$\pm$ 0.64} & 76.70 \fontsize{5pt}{5pt}\selectfont{$\pm$ 1.16} & 82.18 \fontsize{5pt}{5pt}\selectfont{$\pm$ 0.20} & 79.94 \fontsize{5pt}{5pt}\selectfont{$\pm$ 0.14} & 71.82 \fontsize{5pt}{5pt}\selectfont{$\pm$ 0.99} & 83.04 \fontsize{5pt}{5pt}\selectfont{$\pm$ 0.23} \\
Flowers  & 8.40 \fontsize{5pt}{5pt}\selectfont{$\pm$ 0.35 } & 10.37 \fontsize{5pt}{5pt}\selectfont{$\pm$ 0.43} & 10.22 \fontsize{5pt}{5pt}\selectfont{$\pm$ 0.20} & 25.38 \fontsize{5pt}{5pt}\selectfont{$\pm$ 0.63} & 12.13 \fontsize{5pt}{5pt}\selectfont{$\pm$ 0.06} & 16.59 \fontsize{5pt}{5pt}\selectfont{$\pm$ 0.65} & 14.60 \fontsize{5pt}{5pt}\selectfont{$\pm$ 0.24} & 19.77 \fontsize{5pt}{5pt}\selectfont{$\pm$ 0.62} \\
Food     & 4.22 \fontsize{5pt}{5pt}\selectfont{$\pm$ 0.17 } & 3.24 \fontsize{5pt}{5pt}\selectfont{$\pm$ 0.01 } & 2.82 \fontsize{5pt}{5pt}\selectfont{$\pm$ 0.04 } & 6.69 \fontsize{5pt}{5pt}\selectfont{$\pm$ 0.05 } & 7.66 \fontsize{5pt}{5pt}\selectfont{$\pm$ 0.09 } & 7.14 \fontsize{5pt}{5pt}\selectfont{$\pm$ 0.19 } & 7.18 \fontsize{5pt}{5pt}\selectfont{$\pm$ 0.09 } & 8.55 \fontsize{5pt}{5pt}\selectfont{$\pm$ 0.55}  \\
GTSRB    & 38.87 \fontsize{5pt}{5pt}\selectfont{$\pm$ 0.43} & 30.26 \fontsize{5pt}{5pt}\selectfont{$\pm$ 0.60} & 28.17 \fontsize{5pt}{5pt}\selectfont{$\pm$ 0.64} & 60.37 \fontsize{5pt}{5pt}\selectfont{$\pm$ 0.73} & 45.89 \fontsize{5pt}{5pt}\selectfont{$\pm$ 0.10} & 34.93 \fontsize{5pt}{5pt}\selectfont{$\pm$ 0.39} & 37.66 \fontsize{5pt}{5pt}\selectfont{$\pm$ 0.52} & 45.77 \fontsize{5pt}{5pt}\selectfont{$\pm$ 0.05} \\
Pets     & 5.76 \fontsize{5pt}{5pt}\selectfont{$\pm$ 0.11 } & 7.47 \fontsize{5pt}{5pt}\selectfont{$\pm$ 0.18 } & 6.92 \fontsize{5pt}{5pt}\selectfont{$\pm$ 0.14 } & 12.23 \fontsize{5pt}{5pt}\selectfont{$\pm$ 0.25} & 64.80 \fontsize{5pt}{5pt}\selectfont{$\pm$ 0.32} & 68.66 \fontsize{5pt}{5pt}\selectfont{$\pm$ 0.08} & 66.68 \fontsize{5pt}{5pt}\selectfont{$\pm$ 0.07} & 68.23 \fontsize{5pt}{5pt}\selectfont{$\pm$ 0.27} \\
SUN      & 0.87 \fontsize{5pt}{5pt}\selectfont{$\pm$ 0.02 } & 1.10 \fontsize{5pt}{5pt}\selectfont{$\pm$ 0.08 } & 0.60 \fontsize{5pt}{5pt}\selectfont{$\pm$ 0.07 } & 1.50 \fontsize{5pt}{5pt}\selectfont{$\pm$ 0.03 } & 7.93 \fontsize{5pt}{5pt}\selectfont{$\pm$ 0.03 } & 8.73 \fontsize{5pt}{5pt}\selectfont{$\pm$ 0.16 } & 7.74 \fontsize{5pt}{5pt}\selectfont{$\pm$ 0.08 } & 7.33 \fontsize{5pt}{5pt}\selectfont{$\pm$ 0.16}  \\
SVHN     & 66.54 \fontsize{5pt}{5pt}\selectfont{$\pm$ 0.32} & 48.21 \fontsize{5pt}{5pt}\selectfont{$\pm$ 0.30} & 46.60 \fontsize{5pt}{5pt}\selectfont{$\pm$ 1.51} & 62.21 \fontsize{5pt}{5pt}\selectfont{$\pm$ 0.48} & 67.84 \fontsize{5pt}{5pt}\selectfont{$\pm$ 0.21} & 49.17 \fontsize{5pt}{5pt}\selectfont{$\pm$ 0.47} & 45.33 \fontsize{5pt}{5pt}\selectfont{$\pm$ 1.24} & 68.82 \fontsize{5pt}{5pt}\selectfont{$\pm$ 0.44} \\
UCF      & 8.19 \fontsize{5pt}{5pt}\selectfont{$\pm$ 0.33 } & 7.28 \fontsize{5pt}{5pt}\selectfont{$\pm$ 0.66 } & 6.53 \fontsize{5pt}{5pt}\selectfont{$\pm$ 0.10 } & 13.11 \fontsize{5pt}{5pt}\selectfont{$\pm$ 0.19} & 24.50 \fontsize{5pt}{5pt}\selectfont{$\pm$ 0.21} & 27.56 \fontsize{5pt}{5pt}\selectfont{$\pm$ 0.07} & 26.14 \fontsize{5pt}{5pt}\selectfont{$\pm$ 0.23} & 28.14 \fontsize{5pt}{5pt}\selectfont{$\pm$ 0.75} \\
\midrule
Average  & 27.80            & 25.39            & 22.04            & 30.67            & 39.63            & 38.87            & 35.71            & 42.45            \\ \bottomrule
\end{tabular}
}
\end{table}

\subsection{Robustness to Distribution Shift}

We evaluated the robustness of ACAVP to distribution shifts by comparing its performance against baseline approaches on artificially corrupted datasets.
Distribution robustness is a critical aspect of adaptation methods in real-world scenarios where test data often differs from training data due to various forms of noise and corruption.
For this evaluation, we tested various VP approaches on the CIFAR10-C and CIFAR100-C datasets, which apply 15 common types of image corruptions (including noise, blur, weather, and digital distortions) at different severity levels to the original CIFAR10 and CIFAR100 test sets, respectively~\citep{in-c}.
These standardized corruption benchmarks provide a systematic way to assess model robustness under various distribution shifts.
The results are shown in \Cref{tab:results_ood}.
ACAVP consistently outperforms all baseline approaches by significant margins on both corrupted datasets.
Notably, the performance improvement of ACAVP is more pronounced on out-of-distribution data compared to in-distribution performance (refer to \Cref{tab:results_clip_vit_b32}).
For example, on the CIFAR10-C dataset, ACAVP achieves a 7.33 percentage points improvement over VP, which is higher than the 2.62 percentage points improvement observed on the original CIFAR10 dataset.
This enhanced robustness to distribution shifts suggests that the diverse transformation space introduced by ACAVP, combined with overfitting mitigation techniques, enables learning more generalizable transformations.
These results highlight an important practical advantage of ACAVP: improved robustness to real-world distribution shifts, which is often crucial for the deployment of adaptation methods in dynamic and unpredictable environments.

\begin{table}[tb]
  \centering
  \small
  \begin{minipage}[b]{0.4\textwidth}
    \centering
    \caption{
Classification accuracy on the corrupted datasets using CLIP ViT-B/32. 
Classification accuracy means the average accuracy at 15 different types and 5 levels of severity.
}
\label{tab:results_ood}
\small
\begin{tabular}{l|cc}
\toprule
Method & CIFAR10-C & CIFAR100-C \\
\midrule
ZS & 70.9 & 42.59 \\
VP & 76.65 \fontsize{5pt}{5pt}\selectfont{$\pm$ 0.04} & 50.52 \fontsize{5pt}{5pt}\selectfont{$\pm$ 0.12}  \\
EVP & 80.97 \fontsize{5pt}{5pt}\selectfont{$\pm$ 0.14} & 55.17 \fontsize{5pt}{5pt}\selectfont{$\pm$ 0.16} \\
AutoVP & 79.72 \fontsize{5pt}{5pt}\selectfont{$\pm$ 0.28} & 56.34 \fontsize{5pt}{5pt}\selectfont{$\pm$ 0.15}\\
\rowcolor{lightgray}ACAVP & 83.98 \fontsize{5pt}{5pt}\selectfont{$\pm$ 0.20} & 58.68 \fontsize{5pt}{5pt}\selectfont{$\pm$ 0.29}\\
\bottomrule
\end{tabular}
  \end{minipage}
  \hfill
  \begin{minipage}[b]{0.56\textwidth}
    \centering
    \small
\caption{
An ablation study shows the contribution of different transformation components.
Affine refers to affine transformation, Additive to additive transformation, and Color to color transformation. 
All methods use TrivialAugment for fair comparison.
}
\label{tab:ablation}
\begin{tabular}{l|cc}
\toprule
Transformation   & CIFAR10          & Flowers          \\ \midrule
--                         & 88.93        & 61.63        \\
Affine & 90.17 \fontsize{5pt}{5pt}\selectfont{$\pm$ 0.01} & 57.42 \fontsize{5pt}{5pt}\selectfont{$\pm$ 0.03} \\
Color & 92.14 \fontsize{5pt}{5pt}\selectfont{$\pm$ 0.06} & 62.92 \fontsize{5pt}{5pt}\selectfont{$\pm$ 0.04} \\
Affine + Color                    & 92.46 \fontsize{5pt}{5pt}\selectfont{$\pm$ 0.04} & 60.58 \fontsize{5pt}{5pt}\selectfont{$\pm$ 0.08} \\
Resize + Additive & 96.00 \fontsize{5pt}{5pt}\selectfont{$\pm$ 0.05} & 79.24 \fontsize{5pt}{5pt}\selectfont{$\pm$ 0.18} \\
Resize + Additive + Color & 96.05 \fontsize{5pt}{5pt}\selectfont{$\pm$ 0.04} & 80.74 \fontsize{5pt}{5pt}\selectfont{$\pm$ 0.28} \\
Affine + Additive          & 96.31 \fontsize{5pt}{5pt}\selectfont{$\pm$ 0.02} & 88.70 \fontsize{5pt}{5pt}\selectfont{$\pm$ 0.18} \\
\rowcolor{lightgray}Affine + Color + Additive & 96.27 \fontsize{5pt}{5pt}\selectfont{$\pm$ 0.07} & 89.36 \fontsize{5pt}{5pt}\selectfont{$\pm$ 0.19} \\ 
\bottomrule
    \end{tabular}
  \end{minipage}
\end{table}

\subsection{Ablation Study}

We conducted an ablation study to analyze the contribution of individual components in ACAVP on the CIFAR10 and Flowers datasets.
This analysis helps identify which transformations are most effective and how they interact with each other.
Using ZS (no transformation) and EVP (which applies an additive transformation after resizing) as the baselines, we evaluated six variants:
affine transformation alone (Affine), 
color transformation alone (Color),
affine and color transformations (Affine + Color),
EVP with color transformation (Resize + Additive + Color),
EVP with affine transformation replacing the resize operation (Affine + Additive), and
ACAVP that combines affine transformation, color transformation, and additive transformation (Affine + Color + Additive).
To ensure a fair comparison, we incorporated TrivialAugment in the training process for all methods, including the EVP baseline.
The results are shown in \Cref{tab:ablation}.
Individual transformations show varying effectiveness across datasets: the affine transformation improves performance on the CIFAR10 dataset but shows reduced performance on the Flowers dataset compared to ZS, while the color transformation consistently outperforms ZS accuracy on both datasets. 
Combining affine and color transformations yields further improvements. 
On the Flowers dataset, which contains rich textural and color information, the performance gains are particularly illuminating.
The variant with color transformation (Resize + Additive + Color) outperforms EVP by 1.50 percentage points, while the variant with affine transformation (Affine + Additive) exceeds EVP by 9.46 percentage points.
These results demonstrate the significant contributions of both color transformation and affine transformation to accuracy, with affine transformation having a more substantial impact on the Flowers dataset.
Notably, our full method, which incorporates both affine and color transformations, achieves the highest accuracy, surpassing EVP by 10.12 percentage points on the Flowers dataset.
This suggests that these transformations are complementary rather than competing, and their combination leads to more effective adaptation.
The affine transformation likely helps in focusing on the most discriminative regions of flowers, while the color transformation enhances the representation of subtle color variations that are critical for flower classification.
On the CIFAR10 dataset, all variants show improvements over ZS, though the differences are less pronounced than on the Flowers dataset.
This smaller gap is expected since the CIFAR10 features less complex visual patterns and color variations than the Flowers.
Nevertheless, the improvements across both datasets confirm that each component contributes positively to the overall performance of our method.
This ablation study validates our design choices and demonstrates that both the affine and color transformation components contribute meaningfully to improving VP performance.

\subsection{Effectiveness of Overfitting Mitigation Strategies}
\label{sec:exp_overfitting}

\begin{table}[tb!]
\centering
\small
\caption{
Comparison of overfitting mitigation techniques on image-sized VP with CLIP ViT-B/32.
Values in parentheses show accuracy gains from using overfitting mitigation techniques.
}
\label{tab:overfitting}
\begin{tabular}{l|cc}
\toprule
Method                   & CIFAR10          &  Flowers       \\ \midrule
Dropout                  & 88.50 {\fontsize{5pt}{5pt}\selectfont{$\pm$ 0.18}} \color{red}(-1.02) & 53.57 {\fontsize{5pt}{5pt}\selectfont{$\pm$ 0.41}} \color{red}(-0.28) \\
MSE Loss                 & 90.26 {\fontsize{5pt}{5pt}\selectfont{$\pm$ 0.02}} \color{green}(+0.74) & 53.74 {\fontsize{5pt}{5pt}\selectfont{$\pm$ 0.34}} \color{red}(-0.11) \\
Weight decay            & 91.25 {\fontsize{5pt}{5pt}\selectfont{$\pm$ 0.10}} \color{green}(+1.73) & 55.37 {\fontsize{5pt}{5pt}\selectfont{$\pm$ 0.58}} \color{green}(+1.52) \\ 
TrivialAugment           & 91.56 {\fontsize{5pt}{5pt}\selectfont{$\pm$ 0.06}} \color{green}(+2.04) & 70.11 {\fontsize{5pt}{5pt}\selectfont{$\pm$ 0.61}} \color{green}(+16.26) \\
IPMix                    & 90.85 {\fontsize{5pt}{5pt}\selectfont{$\pm$ 0.07}} \color{green}(+1.33) & 66.03 {\fontsize{5pt}{5pt}\selectfont{$\pm$ 0.80}} \color{green}(+12.18) \\
RandAugment              & 91.52 {\fontsize{5pt}{5pt}\selectfont{$\pm$ 0.05}} \color{green}(+2.00) & 72.88 {\fontsize{5pt}{5pt}\selectfont{$\pm$ 0.70}} \color{green}(+19.03) \\
\bottomrule
\end{tabular}
\end{table}

We demonstrate experimentally that mitigating overfitting plays a crucial role in VP training.
Our preliminary experiments showed that VP approaches, particularly those using image-sized noise additions, exhibit strong overfitting tendencies that limit performance.
Here, we investigate various strategies to address this limitation.
We first experimented applying various overfitting mitigation techniques to the image-sized VP approach, which showed overfitting tendencies in our preliminary studies.
The techniques we evaluated include:
dropout layer after the feature extraction layer,
MSE loss minimization between the original and prompted images,
weight decay for parameter regularization,
and three data augmentation methods, TrivialAugment, RandAugment~\citep{randaugment}, and IPMix~\citep{ipmix}, which increase input diversity.
The results are shown in \Cref{tab:overfitting}.
Dropout and MSE show no significant improvement over the baseline.
Weight decay demonstrates a modest accuracy improvement.
Most notably, data augmentation techniques yield substantial performance gains, with TrivialAugment and RandAugment showing powerful improvements on both datasets.
These results strongly suggest that overfitting mitigation is critical for VP training, with data augmentations being especially effective.

\begin{table}[tb!]
\centering
\small
\caption{
Effect of TrivialAugment on existing VP methods with CLIP ViT-B/32. 
Values in parentheses show accuracy gains from adding TrivialAugment.
}
\label{tab:Existing_VP_w/TA}
\begin{tabular}{l|cc}
\toprule
Method & CIFAR10 & Flowers               \\ \midrule
VP     & 94.97 {\fontsize{5pt}{5pt}\selectfont{$\pm$ 0.06}} \color{green}(+1.32) & 79.84 {\fontsize{5pt}{5pt}\selectfont{$\pm$ 0.07}} \color{green}(+12.01) \\
EVP    & 96.00 {\fontsize{5pt}{5pt}\selectfont{$\pm$ 0.05}} \color{green}(+0.13) & 79.24 {\fontsize{5pt}{5pt}\selectfont{$\pm$ 0.18}} \color{green}(+1.48) \\ 
AutoVP & 96.16 {\fontsize{5pt}{5pt}\selectfont{$\pm$ 0.02}} \color{green}(+0.77) & 85.83 {\fontsize{5pt}{5pt}\selectfont{$\pm$ 0.58}} \color{green}(+3.14)  \\
\bottomrule
\end{tabular}
\end{table}

Next, we investigated whether TrivialAugment, the effective and computationally efficient regularization technique from our first experiment, could improve the performance of existing VP baselines.
The results, shown in \Cref{tab:Existing_VP_w/TA}, demonstrate consistent accuracy improvements across all VP baselines when TrivialAugment is applied.
The improvements are particularly pronounced for the standard VP approach on the Flowers dataset, with a substantial 12.01 percentage point accuracy gain.
Interestingly, with TrivialAugment, the standard VP achieves accuracy comparable to EVP, despite EVP's additional resize operation.
This suggests that the resize operation in EVP may have been functioning primarily as an implicit regularizer by preserving more of the original image information and limiting overfitting.
However, this advantage diminishes with explicit input diversity through TrivialAugment, indicating that VP and EVP may have similar expressive power when properly regularized.
These findings strongly support our approach of diversifying the transformation space while addressing overfitting.
They suggest that the performance limitations of conventional VP methods stem not only from restricted transformation operations but also from their susceptibility to overfitting.
By combining a more expressive transformation space with effective regularization techniques, ACAVP addresses both limitations simultaneously, leading to substantial performance improvements across diverse datasets.

\subsection{Computational Efficiency Analysis}

\begin{wraptable}{l}{0.48\linewidth}
\centering
\small
\vspace{-0.2cm}
\caption{
Inference time comparison across different methods using H100 GPU with batch size 500. 
Relative Time is normalized to the CLIP ViT-B/32's inference time.
}
\label{tab:time}
\vspace{-0.2cm}
\resizebox{0.48\textwidth}{!}{
\begin{tabular}{l|cc}
\toprule
Method          & Time (s) & Relative Time \\ \midrule
VP              & 2.88 $\times$ $10^{-5}$  & 0.00          \\
EVP             & 1.38 $\times$ $10^{-4}$  & 0.00          \\
AutoVP          & 1.77 $\times$ $10^{-2}$  & 0.60          \\
\rowcolor{lightgray}ACAVP            & 1.63 $\times$ $10^{-3}$  & 0.06          \\
Coordinator     & 2.01 $\times$ $10^{-1}$  & 6.85          \\ \midrule
Model Inference & 2.93 $\times$ $10^{-2}$  & 1.00          \\ \bottomrule
\end{tabular}
}
\end{wraptable}

A key advantage of VP is their minimal computational overhead during inference, as they modify only the input images without altering the recognition model.
While ACAVP enhances accuracy by incorporating affine and color transformations, these additional operations naturally introduce some computational costs.
In this section, we measured the inference time of different VP methods.
For comparison purposes, we also measured the inference time of the recognition model itself (CLIP ViT-B/32) and a neural network-based VP approach called Coordinator~\citep{blackvip}, which uses an encoder-decoder architecture to generate visual prompts.
The results are presented in \Cref{tab:time}.
As expected, ACAVP requires more computation time than VP and EVP.
However, when compared to the model's inference time, ACAVP adds only 6$\%$ of the model's computational cost, which is negligible in practical applications.
Notably, the Coordinator has a longer inference time than the recognition model inference.
This substantial overhead undermines one of the key benefits of VP approaches: their minimal impact on inference efficiency.
Our approach strikes an optimal balance between computational efficiency and transformation expressivity.
Using linear transformations (affine and color) rather than complex neural networks, we achieve significant improvements in accuracy with only a modest increase in computational cost.
This makes ACAVP particularly attractive for real-world applications where performance and efficiency are important considerations.
These results highlight an important design principle for VP approaches: while enhancing the transformation space is crucial for improving accuracy, the choice of transformation operations should consider their computational implications.
Linear transformations like those used in ACAVP offer a favorable trade-off: They provide substantial expressive power with minimal computational overhead.

\section{Conclusion}
In this paper, we addressed the limitations of conventional visual prompting (VP) techniques by introducing ACAVP, a novel framework that enhances VP's expressive power and mitigates overfitting issues.
Our approach builds upon the existing VP paradigm by incorporating two additional transformation operations: affine transformation, which creates task-specific prompt regions while preserving original image information, and color transformation, which emphasizes task-relevant features through brightness and contrast adjustments.
Furthermore, we demonstrated that data augmentation techniques, particularly TrivialAugment, significantly improve generalization performance for VP methods by mitigating overfitting tendencies.
Our extensive experimental evaluation across 12 diverse image classification datasets using both CLIP ViT and ResNet50 architectures validates the effectiveness of ACAVP.
ACAVP consistently outperforms existing VP methods and achieves higher average accuracy than linear probing, which modifies model parameters.
Notably, our method exhibits superior robustness to distribution shifts, as demonstrated by its performance on corrupted datasets.
These improvements are achieved while maintaining VP's key advantage of minimal computational overhead during inference, with ACAVP adding only $6\%$ to the model's inference time.
The success of ACAVP highlights the importance of transformation expressivity and regularization in VP training.

\bibliography{main}
\bibliographystyle{plainnat}

\clearpage
\appendix

\section{Related Work}

\subsection{Applicability of VP.}
VP has been successfully applied to black-box scenarios where model internals are inaccessible. 
\citet{bar} proposed black-box adversarial reprogramming (BAR), which leverages zeroth-order optimization, using only numerical evaluations of the loss function instead of gradients. \citet{blackvip} further advanced this direction with BlackVIP, enabling adaptation without knowledge of the recognition model's architecture or parameters. 
VP has shown promise in adapting models at test time to handle distribution shifts. 
\citet{decorate} applied VP to test-time adaptation, improving inference accuracy on out-of-distribution data by updating prompts during test time without labels. 
\citet{conv_vp} introduced convolutional visual prompt (CVP), which are pre-trained through self-supervised learning and adapted at test time to enhance out-of-distribution performance.

\subsection{Visual Prompt Tuning.}
Visual prompt tuning (VPT)~\citep{vpt,hanfacing,han20232,zeng2024visual} is conceptually related to visual prompting.
VPT adapts to downstream tasks by optimizing input embeddings as well as the classification head of the model. 
In contrast, VP methods attach learnable parameters directly to the input image, and typically adjust the classification head using heuristic methods~\citep{evp,ilm,blm}, rather than end-to-end optimization.

Importantly, VP methods are often designed for settings where the architecture of the recognition model is unknown or the gradients are inaccessible, as in black-box scenarios~\citep{bar,blackvip}. 

\section{Theoretical Analysis}
\label{sec:theory}

In this section, we provide a theoretical analysis of ACAVP within the framework of probably approximately correct (PAC) learning~\citep{kearns1994introduction}. 
We establish that our proposed method has a smaller approximation error compared to existing visual prompting approaches, which explains its superior empirical performance.

Our analysis draws inspiration from the theoretical framework established by \citet{cai2024sample} for analyzing Sample-specific Masks for Visual Reprogramming (SMM). 
\citet{cai2024sample} demonstrated that using sample-specific masks reduces the approximation error compared to shared masks in visual reprogramming.
We extend this insight to analyze the relationship between different visual prompting methods.

According to \citet{cai2024sample}, the approximation error measures how close the best achievable error by a hypothesis space is to the theoretical minimum error on a distribution. 
When hypothesis spaces exhibit a subset relation, the larger hypothesis space has a smaller approximation error.

\begin{theorem}[\citet{cai2024sample}]
\label{thm:subset_approx}
Given an input space $X$, a discrete label space $Y$, and a distribution $D$ over $X \times Y$, if there are two hypothesis spaces $F_1 \subseteq \{f : X \rightarrow Y\}$ and $F_2 \subseteq \{f : X \rightarrow Y\}$ satisfying $F_1 \subseteq F_2$, then we have
\begin{equation}
\textrm{Err}^{\textrm{apx}}_{D}(F_1) \geq \textrm{Err}^{\textrm{apx}}_{D}(F_2).
\end{equation}
\end{theorem}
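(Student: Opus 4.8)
The plan is to proceed directly from the definition of the approximation error, since the statement is a pure monotonicity claim. First I would recall that, following \citet{cai2024sample}, the approximation error of a hypothesis space $F$ is the gap between the smallest risk attainable within $F$ and the theoretical minimum risk over all admissible maps, i.e. $\textrm{Err}^{\textrm{apx}}_D(F) = \inf_{f \in F} \textrm{Err}_D(f) - \inf_{g} \textrm{Err}_D(g)$, where $\textrm{Err}_D(\cdot)$ denotes the expected risk under $D$ and the second infimum ranges over all functions $X \to Y$. The crucial observation is that the subtracted Bayes-type term $\inf_{g} \textrm{Err}_D(g)$ depends only on $D$ and not on the hypothesis space, so the entire comparison reduces to relating the two infima $\inf_{f \in F_1} \textrm{Err}_D(f)$ and $\inf_{f \in F_2} \textrm{Err}_D(f)$.

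The key step is the monotonicity of the infimum under set inclusion. Because $F_1 \subseteq F_2$, every hypothesis that is feasible for the minimization over $F_1$ is also feasible for the minimization over $F_2$; enlarging the feasible set can only decrease or preserve the infimum, which gives $\inf_{f \in F_2} \textrm{Err}_D(f) \le \inf_{f \in F_1} \textrm{Err}_D(f)$. I would then subtract the common constant $\inf_{g} \textrm{Err}_D(g)$ from both sides; since subtracting a fixed quantity preserves the direction of an inequality, this yields $\textrm{Err}^{\textrm{apx}}_D(F_2) \le \textrm{Err}^{\textrm{apx}}_D(F_1)$, which is exactly the asserted inequality $\textrm{Err}^{\textrm{apx}}_D(F_1) \ge \textrm{Err}^{\textrm{apx}}_D(F_2)$.

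There is essentially no deep obstacle here, and I would expect the argument to be a one-line application of infimum monotonicity once the bookkeeping is in place. The only point requiring genuine care is fixing the precise definition of the approximation error so that the subtracted theoretical-minimum term is manifestly independent of the hypothesis space; any ambiguity there (for instance, whether the approximation error is defined as the infimum risk itself versus the gap to the Bayes risk) would change the phrasing but not the conclusion, since both formulations are monotone in $F$. Beyond the measurability conditions needed to ensure the risks and their infima are well defined, I would not need any structural assumptions on $D$, $X$, or $Y$, and in particular the discreteness of $Y$ plays no role in the argument.
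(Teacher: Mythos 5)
Your argument is correct: with the approximation error defined as the gap between the best-in-class risk and the distribution-dependent minimum risk, the claim reduces to monotonicity of the infimum under set inclusion, and subtracting the hypothesis-space-independent Bayes term preserves the inequality. Note that the paper itself gives no proof of this statement --- it is imported verbatim from \citet{cai2024sample} --- but your one-line argument is exactly the standard one underlying it and matches the paper's informal gloss that a larger hypothesis space can only lower the best achievable error; you are also right that the discreteness of $Y$ is immaterial here.
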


\subsection{Hypothesis Spaces of Visual Prompting Methods}

We now formalize the hypothesis spaces of the three visual prompting methods under consideration. 
For a fixed pre-trained model $f_P$ and fixed output mapping function $f_{\text{out}}$ (where $f'_P = f_{\text{out}} \circ f_P$), we define:

\paragraph{EVP Hypothesis Space.} Enhanced Visual Prompting (EVP) resizes the input image by a fixed factor $s$ and adds a learned pattern $\delta$. 
Since the resize factor is predetermined, a fixed mask $M$ can be prepared in advance. 
The hypothesis space of EVP is defined as:
\begin{equation}
F^{\text{evp}}(f'_P) = \{f | f(x) = f'_P(r_s(x) + M \odot \delta), \forall x \in X\},
\end{equation}
where $r_s(\cdot)$ is a resizing operation with fixed scale factor $s$, $M$ is the predetermined mask, and $\delta$ is the learnable pattern.

\paragraph{AutoVP Hypothesis Space.} AutoVP extends EVP by making the resizing factor learnable. 
Since the image size changes during training, the mask is dynamically created according to the resize parameter. 
Its hypothesis space is defined as:
\begin{equation}
F^{\text{autovp}}(f'_P) = \{f | f(x) = f'_P(r_{\hat{s}}(x) + M \odot \delta), \forall x \in X\},
\end{equation}
where $\hat{s}$ is a learnable scaling parameter, $M$ is the mask dynamically generated based on $\hat{s}$, and $\delta$ is the learnable pattern.

\paragraph{ACAVP Hypothesis Space.} Our proposed ACAVP incorporates affine and color transformation. 
Its hypothesis space is defined as:
\begin{equation}
F^{\text{acavp}}(f'_P) = \{f | f(x) = f'_P(\text{Affine}(x, A) \odot \hat{\sigma} + M \odot \delta), \forall x \in X\},
\end{equation}
where $\text{Affine}(\cdot, A)$ represents the affine transformation with learnable affine prompt $A \in \mathbb{R}^{3\times3}$, $\hat{\sigma} \in \mathbb{R}^{3\times H\times W}$ is the learnable color prompt, and $\delta$ is the learnable additive prompt.

\subsection{Relationship Between Hypothesis Spaces}

Following the approach in \citet{cai2024sample}, we can establish the following relationship between these hypothesis spaces:

\begin{proposition}
\label{prop:nested_spaces}
For any fixed pre-trained model $f_P$ and fixed output mapping function $f_{\text{out}}$, we have
\begin{equation}
F^{\text{evp}}(f'_P) \subseteq F^{\text{autovp}}(f'_P) \subseteq F^{\text{acavp}}(f'_P).
\end{equation}
\end{proposition}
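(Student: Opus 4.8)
The plan is to prove the two inclusions separately, in each case showing that any hypothesis realizable by the smaller method can be reproduced by the larger one through a suitable choice of its additional free parameters. The core idea is that each method's hypothesis space is parameterized by a set of learnable quantities, and the larger space contains the smaller one whenever we can specialize the extra parameters so that the transformation degenerates to the simpler one. I would prove containment pointwise: fix an arbitrary $f \in F^{\text{evp}}(f'_P)$ determined by a pattern $\delta$, and exhibit explicit parameter values making the corresponding member of $F^{\text{autovp}}(f'_P)$ equal to $f$ as a function on all of $X$.

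For the first inclusion $F^{\text{evp}}(f'_P) \subseteq F^{\text{autovp}}(f'_P)$, the key observation is that EVP uses a fixed scale $s$ while AutoVP uses a learnable scale $\hat{s}$. Given any $f \in F^{\text{evp}}(f'_P)$ with pattern $\delta$, I would set the learnable scaling parameter of AutoVP so that $\hat{s} = s$; then $r_{\hat{s}} = r_s$, the dynamically generated mask coincides with the predetermined mask $M$, and choosing the same $\delta$ yields $f'_P(r_{\hat{s}}(x) + M \odot \delta) = f'_P(r_s(x) + M \odot \delta) = f(x)$ for all $x$. Hence $f \in F^{\text{autovp}}(f'_P)$. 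The subtlety to state carefully is that the mask in AutoVP is a deterministic function of $\hat{s}$, so fixing $\hat{s} = s$ indeed recovers EVP's mask rather than a different one.

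For the second inclusion $F^{\text{autovp}}(f'_P) \subseteq F^{\text{acavp}}(f'_P)$, I would show that ACAVP's affine and color transformations can be specialized to reproduce a pure resize followed by an additive prompt. The plan is to choose the affine matrix $A$ to implement exactly the scaling $r_{\hat{s}}$ (a pure scaling is a special case of an affine map, obtained by setting rotation and shear parameters to zero and the scale entries to match $\hat{s}$), and to choose the color prompt so that $\hat{\sigma} \equiv 1$ (the multiplicative identity), which is attainable within the constrained range since $R_\sigma = 6.0 > 1$ and the sigmoid image is $(0,1)$, so $\sigmoid(\sigma)\cdot R_\sigma = 1$ is solvable. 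With these choices, $\text{Affine}(x,A) \odot \hat{\sigma} = r_{\hat{s}}(x)$, the ACAVP transformation collapses to $r_{\hat{s}}(x) + M \odot \delta$, and taking the same $\delta$ gives the desired equality of functions.

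The main obstacle I anticipate is the mask-consistency issue in the second inclusion: ACAVP generates $M$ dynamically as the set of pixels where all three channels are zero after the affine transformation, whereas AutoVP generates $M$ from the resize parameter $\hat{s}$. I would need to argue that when the affine map is a pure scaling matching $\hat{s}$, the empty (zero-valued) region it creates coincides exactly with the region where AutoVP places its additive prompt. This requires checking that resizing via the affine transformation produces the same zero-padding pattern as AutoVP's resize operation, so the two dynamically generated masks agree. A secondary point worth addressing is whether the constraint activation functions ($\sigmoid$, $\tanh$) actually permit attaining the exact parameter values needed (e.g. scale values at the boundary of $(0,1)$ or the identity color factor); I would note that the relevant target values lie in the open interiors of the constrained ranges, so they are genuinely realizable, keeping the containment exact rather than merely approximate.
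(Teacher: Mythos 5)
Your proposal matches the paper's proof essentially step for step: both inclusions are established by specializing the extra learnable parameters (setting $\hat{s}=s$ for the first, and choosing $A$ as a pure scaling matrix together with $\hat{\sigma}=\mathbf{1}$ and the same $\delta$ for the second) so that the richer transformation degenerates to the simpler one. If anything, you are more careful than the paper, which silently assumes the mask consistency and the realizability of these parameter values under the $\sigmoid$/$\tanh$ constraints that you explicitly flag.
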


\begin{proof}
First, we show that $F^{\text{evp}}(f'_P) \subseteq F^{\text{autovp}}(f'_P)$. For any function $f \in F^{\text{evp}}(f'_P)$, we have 
\begin{equation}
f(x) = f'_P(r_s(x) + M \odot \delta)
\end{equation}
for some fixed scale factor $s$ and learnable pattern $\delta$. We can find a corresponding function in $F^{\text{autovp}}(f'_P)$ by setting the learnable parameter $\hat{s} = s$, resulting in 
\begin{equation}
f(x) = f'_P(r_{\hat{s}}(x) + M \odot \delta)
\end{equation}
with the same $\delta$. Therefore, $f \in F^{\text{autovp}}(f'_P)$, which means $F^{\text{evp}}(f'_P) \subseteq F^{\text{autovp}}(f'_P)$.

Next, we show that $F^{\text{autovp}}(f'_P) \subseteq F^{\text{acavp}}(f'_P)$. For any function $f \in F^{\text{autovp}}(f'_P)$, we have 
\begin{equation}
f(x) = f'_P(r_{\hat{s}}(x) + M \odot \delta)
\end{equation}
for some learnable parameter $\hat{s}$ and pattern $\delta$. We can construct a corresponding function in $F^{\text{acavp}}(f'_P)$ by:
\begin{itemize}
    \item Setting the affine transformation matrix $A = \begin{bmatrix} \hat{s} & 0 & 0 \\ 0 & \hat{s} & 0 \\ 0 & 0 & 1 \end{bmatrix}$, which performs only scaling by $\hat{s}$
    \item Setting the color transformation prompt $\hat{\sigma} = \mathbf{1}$ (all ones)
    \item Using the same additive prompt $\delta$
\end{itemize}

With these settings, we have:
\begin{align}
\text{Affine}(x, A) &= r_{\hat{s}}(x) \\
\text{Affine}(x, A) \odot \hat{\sigma} &= r_{\hat{s}}(x) \odot \mathbf{1} = r_{\hat{s}}(x)
\end{align}

Therefore,
\begin{equation}
f(x) = f'_P(\text{Affine}(x, A) \odot \hat{\sigma} + M \odot \delta) = f'_P(r_{\hat{s}}(x) + M \odot \delta)
\end{equation}

This shows that $f \in F^{\text{acavp}}(f'_P)$, which means $F^{\text{autovp}}(f'_P) \subseteq F^{\text{acavp}}(f'_P)$.
\end{proof}

Based on Theorem~\ref{thm:subset_approx} and Proposition~\ref{prop:nested_spaces}, we can establish our main theoretical result:

\begin{proposition}
\label{prop:approx_errors}
For any fixed pre-trained model $f_P$, fixed output mapping function $f_{\text{out}}$, and distribution $D_T$ of the target task, we have
\begin{equation}
\textrm{Err}^{\textrm{apx}}_{D_T}(F^{\text{evp}}(f'_P)) \geq \textrm{Err}^{\textrm{apx}}_{D_T}(F^{\text{autovp}}(f'_P)) \geq \textrm{Err}^{\textrm{apx}}_{D_T}(F^{\text{acavp}}(f'_P)).
\end{equation}
\end{proposition}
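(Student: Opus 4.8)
The plan is to obtain this proposition as an immediate corollary of the two results already in hand: the monotonicity of approximation error under inclusion (Theorem~\ref{thm:subset_approx}) and the nesting of the three hypothesis spaces (Proposition~\ref{prop:nested_spaces}). Since all the substantive work—exhibiting explicit parameter settings that embed each smaller space into the next larger one—is already carried out in the proof of Proposition~\ref{prop:nested_spaces}, nothing new needs to be constructed here; the argument is purely a matter of invoking the existing ingredients in the right order.

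First I would recall the conclusion of Proposition~\ref{prop:nested_spaces}, namely the chain
\begin{equation}
F^{\text{evp}}(f'_P) \subseteq F^{\text{autovp}}(f'_P) \subseteq F^{\text{acavp}}(f'_P).
\end{equation}
Next I would apply Theorem~\ref{thm:subset_approx} with the target distribution $D_T$ to the first inclusion $F^{\text{evp}}(f'_P) \subseteq F^{\text{autovp}}(f'_P)$, which yields $\textrm{Err}^{\textrm{apx}}_{D_T}(F^{\text{evp}}(f'_P)) \geq \textrm{Err}^{\textrm{apx}}_{D_T}(F^{\text{autovp}}(f'_P))$. I would then apply the same theorem to the second inclusion $F^{\text{autovp}}(f'_P) \subseteq F^{\text{acavp}}(f'_P)$ to obtain $\textrm{Err}^{\textrm{apx}}_{D_T}(F^{\text{autovp}}(f'_P)) \geq \textrm{Err}^{\textrm{apx}}_{D_T}(F^{\text{acavp}}(f'_P))$. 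Chaining these two inequalities by transitivity of $\geq$ produces exactly the claimed statement.

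The only point requiring a moment's care is that Theorem~\ref{thm:subset_approx} is stated for hypothesis spaces of functions $X \to Y$, so I would note that each of $F^{\text{evp}}(f'_P)$, $F^{\text{autovp}}(f'_P)$, and $F^{\text{acavp}}(f'_P)$ is by construction a subset of $\{f : X \to Y\}$ for the fixed composed map $f'_P = f_{\text{out}} \circ f_P$, so that the theorem applies verbatim with $D_T$. There is no genuine obstacle: the difficulty, such as it is, lives entirely in Proposition~\ref{prop:nested_spaces}, where one must verify that the affine-plus-color parameterization can realize pure scaling with a trivial color prompt $\hat{\sigma} = \mathbf{1}$. Once that inclusion is granted, the present proposition is a two-line consequence of monotonicity, and I would present it as such rather than reproving any containment.
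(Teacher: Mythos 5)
Your proposal is correct and matches the paper's own reasoning exactly: the paper presents Proposition~\ref{prop:approx_errors} as an immediate consequence of Theorem~\ref{thm:subset_approx} applied to the two inclusions established in Proposition~\ref{prop:nested_spaces}, with no additional argument. Your added remark that each hypothesis space is indeed a subset of $\{f : X \to Y\}$ so the theorem applies verbatim is a harmless (and slightly more careful) elaboration of the same two-line derivation.
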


This theoretical result, following the framework proposed by \citet{cai2024sample}, explains why ACAVP achieves superior performance compared to EVP and AutoVP. 
By expanding the hypothesis space through the introduction of affine and color transformation, ACAVP can more effectively approximate the optimal function for the target task.

Similar to the observation in \citet{cai2024sample} regarding estimation error, we note that while a larger hypothesis space typically reduces approximation error, it may increase estimation error due to the added complexity. 
However, our experimental results show that this potential increase in estimation error is well-managed through our training approach, particularly through the use of TrivialAugment for overfitting mitigation.

\section{Implementation Details}
\label{sec:impl_detail}
In practice, our ACAVP method applies the transformations in the following order: first, the affine transformation is applied to the input image; then, the color transformation is applied through element-wise multiplication; finally, the additive noise prompt is applied in the designated regions specified by the mask $M$.

For initialization, we set the affine transformation parameters to perform a resizing operation, inspired by the success of a previous study~\citep{evp}.
Specifically, we initialize the parameters to resize a 224×224 image to 164×164, setting $\hat{s} = 0.73$ (which corresponds to $s = \sigmoid^{-1}(0.73)$) and $t = \theta = sh = 0$.
This initialization leverages the effectiveness of resizing operations demonstrated in previous research while providing a good starting point for optimization.
The color transformation prompt is initialized to $\hat{\sigma} = 1$, corresponding to no color transformation of the original image.
The additive prompt $\delta$ is initialized to zero, following the initialization strategy used in \citep{evp}.
This initialization strategy ensures that the training begins with a controlled transformation (modest resizing) while maintaining the essential visual information, allowing the model to gradually learn more complex and task-specific transformations.

During the training process, all three components of ACAVP—affine, color, and additive transformations—are jointly optimized using stochastic gradient descent to minimize the task-specific loss function. 
The TrivialAugment data augmentation is applied to each mini-batch before the prompting transformations, providing the necessary input diversity to prevent overfitting.

\section{Experimental Setup}
\label{sec:exp_setup}

\subsection{Datasets}

We evaluated our method on 12 downstream classification tasks: CIFAR100, CIFAR10~\citep{cifar}, Flowers102~\citep{flowers}, Food101~\citep{food}, EuroSAT~\citep{eurosat}, SUN397~\citep{sun}, DTD~\citep{dtd}, UCF101~\citep{ucf}, SVHN~\citep{svhn}, OxfordPets~\citep{pets}, GTSRB~\citep{gtsrb}, and CLEVR~\citep{clevr}. 
These datasets encompass diverse visual domains, including natural scenes, human actions, textures, and fine-grained object details, providing a comprehensive evaluation of our method's generalization capability.

For dataset splits, we follow the CoOp protocol~\citep{zhou2022coop,zhou2022cocoop}. 
For datasets with predefined validation sets, we use them directly. 
For others, we randomly allocate $10\%$ of the training data for validation and use the remaining $90\%$ for training. 
We select the checkpoint with the highest validation accuracy for final evaluation. 
Detailed dataset information is provided in \Cref{tab:dataset_detail}.

\begin{table}[tb]
\centering
\small
\caption{
Detailed dataset information.
$^\dagger$ indicates that data splits provided by CoOp were used.
}
\label{tab:dataset_detail}
\begin{tabular}{cccccc}
\toprule
Dataset              & Train Size & Validation Size & Test Size & Classes & Image Size \\ \midrule
CIFAR100             & 45,000     & 5,000           & 10,000    & 100     & 32×32      \\
CIFAR10              & 45,000     & 5,000           & 10,000    & 10      & 32×32      \\
Flowers102$^\dagger$ & 4,093      & 1,633           & 2,463     & 102     & 128×128    \\
Food101$^\dagger$    & 50,500     & 20,200          & 30,300    & 101     & 128×128    \\
EuroSAT$^\dagger$    & 13,500     & 5,400           & 8,100     & 10      & 128×128    \\
SUN397$^\dagger$     & 15,880     & 3,970           & 19,850    & 397     & 128×128    \\
UCF101$^\dagger$     & 7,639      & 1,898           & 3,783     & 101     & 128×128    \\
SVHN                 & 65,931     & 7,326           & 26,032    & 10      & 32×32      \\
OxfordPets$^\dagger$ & 2,944      & 736             & 3,669     & 37      & 128×128    \\
DTD$^\dagger$        & 2,820      & 1,128           & 1,692     & 47      & 128×128    \\
GTSRB                & 35,288     & 3,921           & 12,630    & 43      & 32×32      \\
CLEVR                & 63,000     & 7,000           & 15,000    & 8       & 480×320     \\ \bottomrule
\end{tabular}
\end{table}

\subsection{Model Architectures}

We evaluate ACAVP on two representative architectures: CLIP ViT-B/32~\citep{clip} and ImageNet-1k~\citep{imagenet} pre-trained ResNet50~\citep{resnet}. 
For CLIP experiments, we use the prompt template "This is a photo of a [LABEL]" for all datasets except CLEVR, where we use "This is a photo of [LABEL] objects" to better match its task-specific nature. 
For the ImageNet pre-trained model, we experiment with two label mapping strategies: random label mapping, which randomly assigns ImageNet classes to target dataset classes, and the label mapping method proposed by \citep{evp}.

\subsection{Baselines}

We compare ACAVP against several baseline methods. 
Zero-shot inference (ZS) using CLIP serves as our basic baseline without any adaptation. 
We also compare against standard visual prompting (VP)~\citep{vp} that adds learnable prompts around the input image, Enhanced VP (EVP)~\citep{evp} that applies fixed resizing before adding prompts, and AutoVP~\citep{autovp} that learns the resize ratio parameter along with the prompt. 
These baselines represent the current state-of-the-art visual prompting techniques, comprehensively evaluating our method's advantages.

\subsection{Training details}
In our training setup, we used an initial learning rate of 40 with a Cosine Annealing Learning Rate Scheduler to gradually decrease the learning rate throughout training.
The total number of epochs was set to 1000 for all experiments.
We employed SGD as our optimizer with a momentum of 0.9.
For most datasets, we used a batch size of 256, with the exception of DTD dataset where we used a batch size of 64 due to its smaller size.
For input preprocessing, we first resized all training images to 256×256 pixels, then performed center cropping to 224×224 pixels, followed by normalization.
For EVP training, we additionally applied horizontal flip as a data augmentation technique and implemented gradient normalization to stabilize training.
When training AutoVP with CLIP models, we used gradient clipping via torch.nn.utils.clip\_grad\_value\_ to limit the maximum gradient value to 0.001.
Similarly, for ACAVP training, we applied gradient clipping with a maximum value of 0.001, and when using CLIP models, we also employed gradient normalization for improved stability.

\section{Additional Experiments}

\subsection{Impact of Data Augmentation on ACAVP}
We analyze the effectiveness of different data augmentation techniques when applied to ACAVP training.
The results of our comparison are presented in \Cref{tab:acavp_data_aug}.
As shown in the table, all augmentation techniques improved the performance of ACAVP compared to training without augmentation.

\begin{table}[tb!]
\centering
\small
\caption{
Comparison of data augmentation techniques on ACAVP with CLIP ViT-B/32. 
}
\label{tab:acavp_data_aug}
\begin{tabular}{l|cc}
\toprule
Data Augmentation & CIFAR10 & Flowers               \\ \midrule
--     & 95.74 {\fontsize{5pt}{5pt}\selectfont{$\pm$ 0.03}} & 87.55 {\fontsize{5pt}{5pt}\selectfont{$\pm$ 0.64}}  \\
IPIMix    & 95.96 {\fontsize{5pt}{5pt}\selectfont{$\pm$ 0.03}}  & 90.65 {\fontsize{5pt}{5pt}\selectfont{$\pm$ 0.27}} \\ 
RandAugment & 96.01 {\fontsize{5pt}{5pt}\selectfont{$\pm$ 0.10}}  & 88.24 {\fontsize{5pt}{5pt}\selectfont{$\pm$ 0.18}}  \\
TrivialAugment & 96.27 {\fontsize{5pt}{5pt}\selectfont{$\pm$ 0.07}}  & 89.36 {\fontsize{5pt}{5pt}\selectfont{$\pm$ 0.19}}   \\
\bottomrule
\end{tabular}
\end{table}

\subsection{Visualization Analysis}

We show visualizations of the transformed images generated by different VP methods and the Grad-CAM~\citep{gradcam} activation maps in \Cref{fig:visualize}.
For the CLIP ViT model, we observe that ACAVP applies a minimal affine transformation, appearing similar to the initial resize operation.
This conservative transformation is logical considering that CLIP ViT already achieves high zero-shot classification accuracy on the CIFAR10 dataset (88.93$\%$ as shown in \Cref{tab:results_clip_vit_b32}).
When a model has strong prior knowledge about the target domain, extensive transformations may be unnecessary or even detrimental, and our method adaptively learns this characteristic.
In contrast, when applied to ResNet50 on the CIFAR10 dataset, ACAVP performs more substantial transformations compared to other VP approaches. 
This is consistent with ResNet50's lower zero-shot classification capabilities on this dataset, necessitating more significant input modifications to improve accuracy. 
The ability of ACAVP to adapt its transformation magnitude based on the backbone model inherent capabilities demonstrates its flexibility across different model architectures.
The capacity of ACAVP for more expressive transformations stems from the inclusion of both affine transformation and color transformation, which provide a richer transformation space compared to conventional VP approaches that rely solely on additive noise.
This enhanced flexibility allows our method to apply appropriate transformations based on the specific requirements of the dataset and backbone model combination.
The Grad-CAM visualization results reveal another important advantage of our approach.
ACAVP produces higher activation values across broader regions of the image compared to other VP methods, indicating more comprehensive utilization of the image information.
This suggests that our approach enables the model to consider a wider range of visual features during classification, rather than focusing on limited regions as observed with other VP methods.

\begin{figure}[tb!]
\centering
\small
\begin{tabular}{ccccc}
VP & EVP & AutoVP & ACAVP  \\
\includegraphics[width=0.2\linewidth]{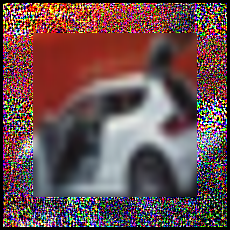} &
 \includegraphics[width=0.2\linewidth]{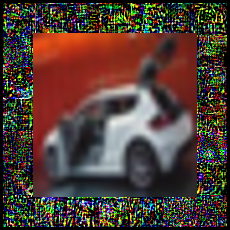}  & \includegraphics[width=0.2\linewidth]{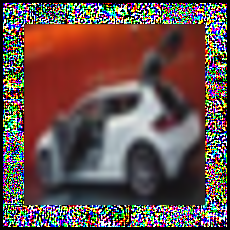}  & \includegraphics[width=0.2\linewidth]{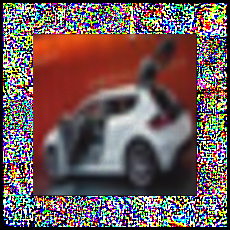}     \\
\includegraphics[width=0.2\linewidth]{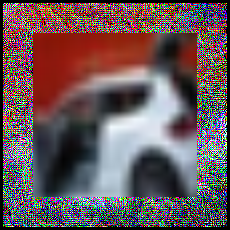} &
\includegraphics[width=0.2\linewidth]{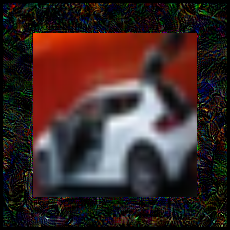} &
\includegraphics[width=0.2\linewidth]{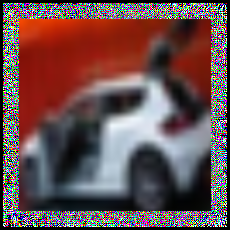} &
\includegraphics[width=0.2\linewidth]{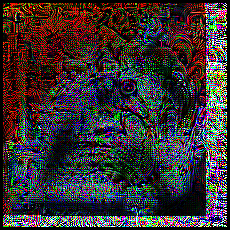}  \\
\includegraphics[width=0.2\linewidth]{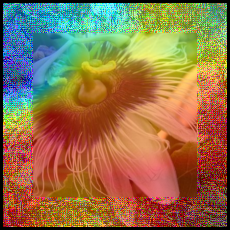}  &
\includegraphics[width=0.2\linewidth]{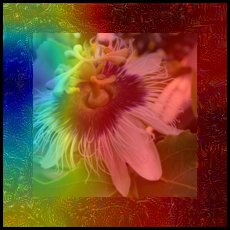}  &\includegraphics[width=0.2\linewidth]{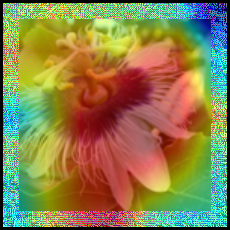}  &\includegraphics[width=0.2\linewidth]{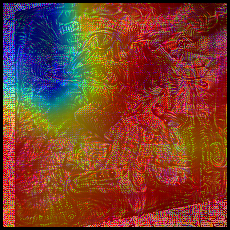}  \\
\end{tabular}
\caption{
Visualization analysis of different VP methods.
The first row shows transformed images on the CIFAR10 dataset using CLIP ViT-B/32. 
The second row presents transformed images on the CIFAR10 dataset using ResNet50. 
The third row displays Grad-CAM activation maps on the Flowers dataset using ResNet50. 
}
\label{fig:visualize}
\end{figure}

\subsection{Hyperparameters sensitivity}
\label{sec:exp_hparams}

In this section, we investigate the sensitivity of ACAVP to its key hyperparameters.
The ACAVP introduces several hyperparameters that control the range of transformations applied to input images.
Specifically, we analyze the impact of hyperparameters $R_{\theta}$, $R_{sh}$, and $R_t$ that restrict the range of affine transformation, and $R_{\sigma}$ that limits the range of color transformation.

We conducted experiments using the CLIP ViT-B/32 model on the CIFAR10 and Flowers datasets.
For the affine transformation parameters, we varied the scaling factor $X$ applied to the default values ($R_t = 0.05$, $R_{\theta} = R_{sh} = 0.1$).
For the color transformation parameter $R_{\sigma}$, we tested values ranging from 2.5 to 20.0, with the default value being 6.0.
All other training settings remained consistent with our main experimental setup.

The results of our hyperparameter sensitivity analysis are presented in \Cref{fig:hparams}.
As shown in the figure, the performance of ACAVP exhibits some variation across different hyperparameter values, but the impact is relatively modest.
For the CIFAR10 dataset, the accuracy ranges from 95.9\% to 96.3\% across different affine transformation hyperparameter values, and from 96.0\% to 96.3\% for different color transformation hyperparameter values.
Similarly, for the Flowers dataset, the accuracy ranges from 88.0\% to 89.5\% for affine transformation parameters and from 88.5\% to 89.5\% for color transformation parameters.

Importantly, ACAVP consistently outperforms the strongest baselines across all hyperparameter configurations tested.
For the CIFAR10 dataset, the second-best baseline (EVP) achieves 95.87\% accuracy, while ACAVP exceeds this performance regardless of the hyperparameter values selected.
Similarly, for the Flowers dataset, the second-best baseline (AutoVP) achieves 82.69\% accuracy, which is substantially lower than even the worst-performing ACAVP configuration.

These results demonstrate that while ACAVP's performance can be further optimized through hyperparameter tuning, the method is relatively robust to hyperparameter choices.
This robustness is particularly valuable in practical applications, as it suggests that ACAVP can achieve strong performance even without extensive hyperparameter optimization.
The consistent performance advantage over baseline methods across all hyperparameter configurations further validates the effectiveness of the proposed approach.

\begin{figure}[tb!]
  \centering
  \begin{minipage}[b]{0.48\textwidth}
    \centering
    \includegraphics[width=\textwidth]{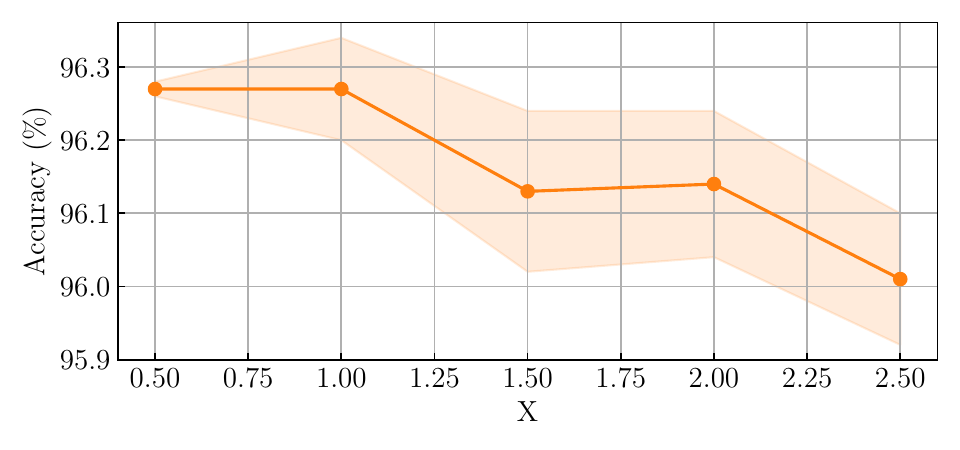}
    \subcaption{AVCAVP accuracy when changing the hyperparameters of the affine transformation on the CIFAR10 dataset.}
    \label{fig:a}
  \end{minipage}%
  \hfill
  \begin{minipage}[b]{0.48\textwidth}
    \centering
    \includegraphics[width=\textwidth]{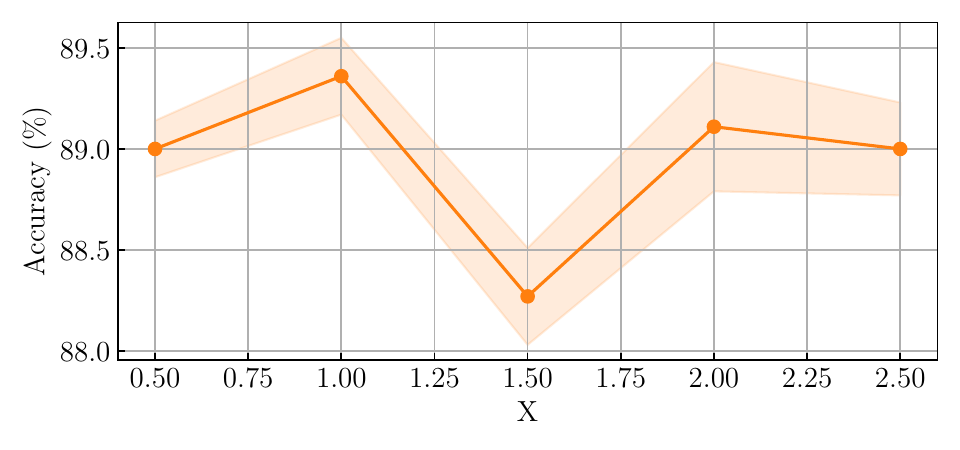}
    \subcaption{AVCAVP accuracy when changing the hyperparameters of the affine transformation on the Flowers dataset.}
    \label{fig:b}
  \end{minipage}%
 \vspace{0.5em}
  \begin{minipage}[b]{0.48\textwidth}
    \centering
    \includegraphics[width=\textwidth]{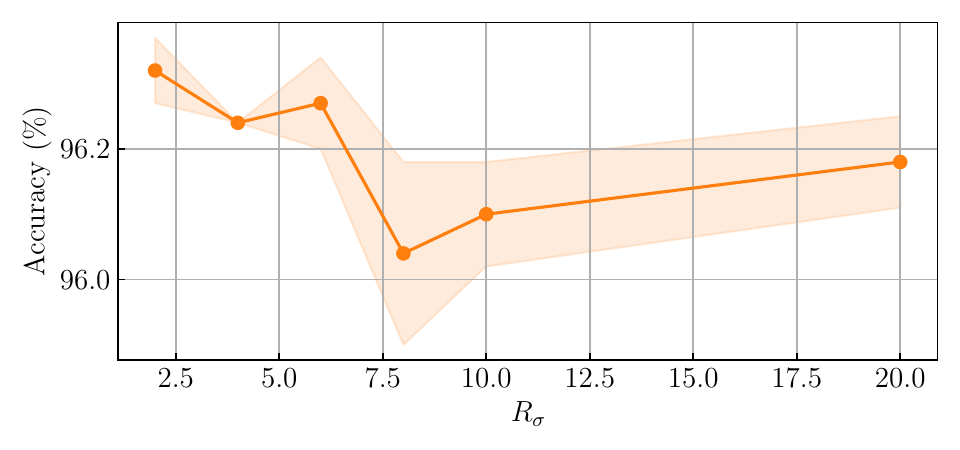}
    \subcaption{AVCAVP accuracy when changing the hyperparameter of the color transformation on the CIFAR10 dataset.}
    \label{fig:c}
  \end{minipage}%
  \hfill
  \begin{minipage}[b]{0.48\textwidth}
    \centering
    \includegraphics[width=\textwidth]{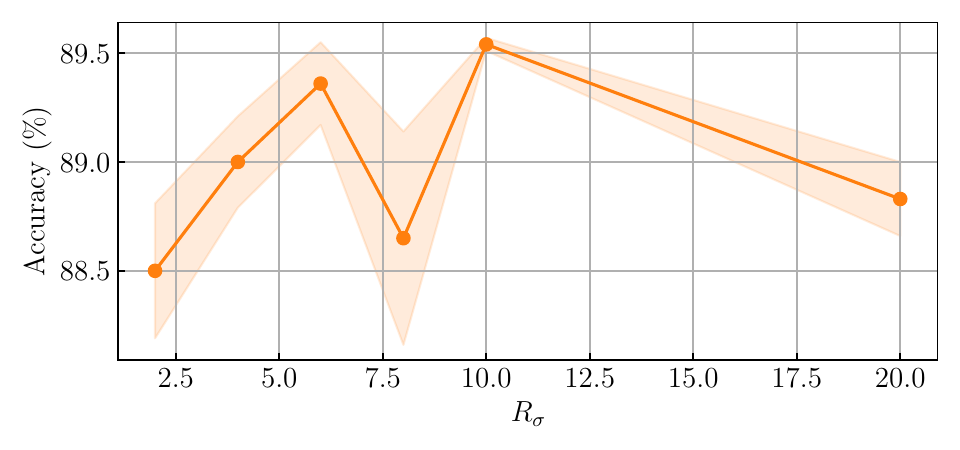}
    \subcaption{AVCAVP accuracy when changing the hyperparameter of the color transformation on the Flowers dataset.}
    \label{fig:d}
  \end{minipage}
  
  \caption{
  Performance of ACAVP with different hyperparameter values using CLIP ViT-B/32. 
  (a) Accuracy on the CIFAR10 dataset when scaling affine transformation hyperparameters ($R_t$, $R_{\theta}$, $R_{sh}$) by a factor of $X$ from their default values, with the x-axis showing the value of $X$. 
  (b) Accuracy on the Flowers dataset when scaling affine transformation hyperparameters by a factor of $X$, with the x-axis showing the value of $X$.
  (c) Accuracy on the CIFAR10 dataset with different values of the color transformation hyperparameter $R_{\sigma}$, with the x-axis showing the value of $R_{\sigma}$. 
  (d) Accuracy on the Flowers dataset with different values of the color transformation hyperparameter $R_{\sigma}$, with the x-axis showing the value of $R_{\sigma}$.
  }
  \label{fig:hparams}
\end{figure}

\subsection{Experiments with other models}
\label{sec:other_models}
We conducted additional experiments using the other models (SigLIP~\citep{siglip}, DINOv2~\citep{dinov2}, ViT-L/16, and ResNet101).
All experiments were conducted with a single run.
We set the number of training epochs to 200.
For learning rate optimization, we evaluated two different learning rates: 40 and 0.1, and reported the results from the higher-performing configuration.
The results are presented in \Cref{tab:other_models}.
ACAVP achieves the highest accuracy across all architectures except DINOv2, where it achieves the second-highest performance. 
All VP methods, including ACAVP, can improve the performance of models with high zero-shot capabilities such as SigLIP. 
Notably, ACAVP demonstrates strong performance even with state-of-the-art vision encoders.

\begin{table}[tb]
\centering
\caption{
Classification accuracy comparison on the CIFAR10 and Flowers datasets using various models.
The parentheses indicate model architecture specifications.
$^\dagger$ indicates results with label mapping proposed by \citet{evp}. 
}
\label{tab:other_models}
\small
\resizebox{\linewidth}{!}{
\begin{tabular}{l|cccccccc|c}
\toprule
Model  & \multicolumn{2}{c}{ResNet101$^\dagger$}  & \multicolumn{2}{c}{ViT-L/16$^\dagger$}   & \multicolumn{2}{c}{DINOv2(ViT-S/14)$^\dagger$} & \multicolumn{2}{c}{SigLIP(ViT-SO400M/14)} & Average          \\ \midrule
Method & CIFAR10        & Flowers        & CIFAR10        & Flowers        & CIFAR10           & Flowers           & CIFAR10             & Flowers             &                  \\ \midrule
ZS     & 57.32          & 5.16          & 69.49          & 4.95          & 72.35             & 8.89             & 95.30                & 88.96              & 50.30         \\
VP     & 67.44          & 10.72          & 90.69          & 13.60           & 81.63             & 24.69             & 96.41               & 97.00                  & 60.27          \\
EVP    & 72.86          & 13.52          & 94.96          & 27.49          & 91.63             & 10.23             & 98.76               & 98.66               & 63.51         \\
AutoVP & 68.82          & 13.20           & 94.75          & 25.74          & \textbf{94.25}    & \textbf{43.85}    & 98.72               & 97.48               & 67.10         \\
\rowcolor{lightgray}ACAVP  & \textbf{75.99} & \textbf{17.66} & \textbf{95.38} & \textbf{39.71} & 93.62             & 38.49             & \textbf{98.83}      & \textbf{98.74}      & \textbf{69.80} \\ \bottomrule
\end{tabular}
}
\end{table}

Next, \Cref{tab:overhead_other_models} shows the computational overhead (the relative time compared to the inference time of each model) for each model.
Since these models have shorter inference times than the CLIP ViT-B/32 used in the main experiments, the relative times for ACAVP are slightly higher than those reported in \Cref{tab:time}. 
However, across all models, ACAVP requires less than half the inference time of the model itself. 
Notably, for the high-performing SigLIP, the relative time is only 0.14, indicating minimal overhead. 
While we have not yet pursued this direction, further optimization of the implementation could potentially reduce the computational time of VP methods even more.

\begin{table}[tb]
\centering
\caption{
Inference time comparison across different methods using H100 GPU with batch size 500. 
Relative Time is normalized to the model's inference time.
The parentheses indicate model architecture specifications.
}
\label{tab:overhead_other_models}
\small
\begin{tabular}{l|cccc}
\toprule
Method      & ResNet101 & ViT-L/16 & DINOv2(ViT-S/14) & SigLIP(ViT-SO400M/14) \\ \midrule
VP          & 0.00  & 0.01     & 0.01             & 0.00                  \\
EVP         & 0.02  & 0.03     & 0.04             & 0.01                  \\
AutoVP      & 1.94  & 3.38     & 5.33             & 1.53                  \\
\rowcolor{lightgray}ACAVP       & 0.18  & 0.31     & 0.49             & 0.14                  \\
Coordinator & 21.92 & 38.24    & 60.33            & 17.32                 \\ \bottomrule
\end{tabular}
\end{table}

\subsection{Experiments on the ImageNet dataset}
The VP method primarily targets users with limited computational resources and is generally evaluated on small to medium-sized datasets, but demonstrating performance on standard benchmarks is crucial.
For this purpose, we conducted additional experiments using the ImageNet dataset~\citep{imagenet}.
We used CLIP ViT-B/16 and trained ACAVP for 10 epochs.
The results are presented in \Cref{tab:imagenet}.
Although the performance gain is modest in this preliminary experiment, it demonstrates that ACAVP can be extended to large-scale datasets like ImageNet. 
This result highlights the potential flexibility and scalability of our method beyond its original low-resource target scenario.

\begin{table}[tb]
\centering
\small
\caption{
Classification accuracy comparison on the ImageNet dataset using CLIP ViT-B/16.
}
\label{tab:imagenet}
\small
\begin{tabular}{l|c}
\toprule
Method      & ImageNet\\ \midrule
ZS          & 67.48   \\
\rowcolor{lightgray}ACAVP & \textbf{67.64} \\ \bottomrule
\end{tabular}
\end{table}

\subsection{Experiments on generative models and transferability to unseen models}
Generative models are increasingly prevalent in current state-of-the-art technology, making it important to demonstrate that our method is effective against such models.
We used InstructBLIP~\citep{instructblip} as the recognition model for VP training and evaluated both InstructBLIP and BLIP2~\citep{blip2} on the CIFAR10 and CIFAR100 datasets.
BLIP2 was not used during VP training but only during evaluation.
During this experiment, we did not access BLIP2's architecture or gradient information; we used only its input-output interface. 
This setting replicates the typical usage scenario of cloud APIs and demonstrates the applicability of our method in actual black-box environments. 
The results are presented in \Cref{tab:generative_models}.
The results demonstrate that all VP methods are indeed effective on generative models like InstructBLIP. 
Notably, ACAVP achieves the highest accuracy, outperforming existing methods across both datasets. 
Interestingly, ACAVP exhibits superior transferability compared to TVP, even though TVP incorporates additional loss functions specifically designed to enhance transferability to models not used during training (BLIP2 in this case). 
This suggests that the expanded transformation space and overfitting mitigation introduced by ACAVP effectively improve VP's generalization performance and transferability. 

\begin{table}[tb]
\centering
\caption{
Comparison of classification accuracy with generative models.
We used InstructBLIP for VP training and did not use BLIP2 during training, only during testing.
}
\label{tab:generative_models}
\small
\begin{tabular}{l|cccc}
\toprule
Model  & \multicolumn{2}{c}{InstructBLIP} & \multicolumn{2}{c}{BLIP2}       \\ \midrule
Method & CIFAR10         & CIFAR100       & CIFAR10        & CIFAR100       \\ \midrule
ZS     & 88.11           & 82.41          & 58.41          & 60.65          \\
EVP    & 98.40           & 85.05          & 83.20           & 58.39          \\
TVP    & 98.78           & 83.10          & 85.15          & 62.80           \\
\rowcolor{lightgray}ACAVP  & \textbf{98.85}  & \textbf{88.72} & \textbf{85.32} & \textbf{66.67} \\ \bottomrule
\end{tabular}
\end{table}

\subsection{Experiments on the object detection and semantic segmentation tasks}
We conducted experiments on the object detection and semantic segmentation tasks to evaluate the broader applicability of our approach. 
For the object detection task, we trained and evaluated an SSD~\citep{ssd} model (with ImageNet pre-trained VGG16~\citep{vgg} backbone) on PASCAL VOC2007~\citep{pascal_voc}. 
For the semantic segmentation task, we trained and evaluated a DeepLabv3~\citep{deeplabv3} model (with ImageNet pre-trained ResNet50 backbone) on PASCAL VOC2012. 
We jointly optimized VP and the classification heads of these models during training. 
The results are presented in \Cref{tab:od_and_ss}.
ACAVP demonstrates superior performance compared to the baseline VP method across both tasks. 
These results indicate that ACAVP's enhanced transformation space is effective beyond image classification tasks. 
While the absolute performance on the object detection task is relatively modest, this can be attributed to experimental constraints imposed by limited time resources, specifically the reduced number of training epochs and the use of only PASCAL VOC2007 for training (whereas standard practice typically involves training on both PASCAL VOC2007 and PASCAL VOC2012 datasets).

\begin{table}[tb]
\centering
\small
\caption{
Performance comparison on the object detection and semantic segmentation tasks.
}
\label{tab:od_and_ss}
\small
\begin{tabular}{l|cc}
\toprule
Method      & PASCAL VOC2007 mAP & PASCAL VOC2012 mIoU \\ \midrule
ZS          & 32.8 & 62.3 \\
\rowcolor{lightgray}ACAVP & \textbf{33.0} & \textbf{64.1} \\ \bottomrule
\end{tabular}
\end{table}

\subsection{Comparisons with Visual Prompt Tuning}
We compare ACAVP and VPT~\citep{vpt} using the accuracy reported in \citep{evp}.
The results are presented in \Cref{tab:vpt}.
While VPT performs better on some datasets, ACAVP outperforms VPT on the majority of them, demonstrating its competitive performance.

\begin{table}[tb]
\centering
\caption{
Classification accuracy comparison of ACAVP and VPT using CLIP ViT-B/32.
}
\label{tab:vpt}
\small
\resizebox{\linewidth}{!}{
\begin{tabular}{l|cccccccccc}
\toprule
Method & CIFAR10               & CIFAR100              & CLEVR                 & DTD                   & EuroSAT               & Flowers               & Food          & Pets          & SUN           & SVHN                  \\ \midrule
VPT    & 95.0                    & 76.6                  & 58.6                  & 61.6                  & 94.6                  & 76.2                  & \textbf{84.7} & \textbf{92.1} & \textbf{69.3} & 86.1                  \\
\rowcolor{lightgray}ACAVP  & \textbf{96.27} \fontsize{5pt}{5pt}\selectfont{$\pm$ 0.07} & \textbf{80.06} \fontsize{5pt}{5pt}\selectfont{$\pm$ 0.25} & \textbf{77.19} \fontsize{5pt}{5pt}\selectfont{$\pm$ 0.17} & \textbf{67.43} \fontsize{5pt}{5pt}\selectfont{$\pm$ 0.83} & \textbf{97.51} \fontsize{5pt}{5pt}\selectfont{$\pm$ 0.09} & \textbf{89.36} \fontsize{5pt}{5pt}\selectfont{$\pm$ 0.19} & 77.86 \fontsize{5pt}{5pt}\selectfont{$\pm$ 0.09}  & 87.55 \fontsize{5pt}{5pt}\selectfont{$\pm$ 0.02}  & 64.95 \fontsize{5pt}{5pt}\selectfont{$\pm$ 0.10}  & \textbf{91.81} \fontsize{5pt}{5pt}\selectfont{$\pm$ 0.18} \\ \bottomrule
\end{tabular}
}
\end{table}

\section{Limitations}
\label{sec:limitations}

While ACAVP demonstrates superior performance across diverse datasets and model architectures, it has certain limitations that should be acknowledged.
First, although the computational overhead of ACAVP is relatively minimal compared to model inference (only $6\%$ of the model's computational cost), it still requires more computation than simpler VP methods such as VP and EVP.
This increased computational requirement might become relevant in extremely resource-constrained environments where even small additional costs are significant.

ACAVP's transformation space, while significantly more expressive than conventional VP approaches, is still limited to affine, color, and additive transformations.
More complex transformations, such as non-linear geometric transformations or advanced image processing operations, could potentially further enhance the expressive power of VP.
However, incorporating such transformations would likely increase the computational overhead and potentially introduce additional training challenges.

\end{document}